\newcommand{\Reals}{\mathbb{Re}}
\newcommand{\indep}{\perp\!\!\!\perp}
\newcommand{\notindep}{\not\!\perp\!\!\!\perp}
\newcommand{\cov}{\mathrm{Cov}}
\newcommand{\Pa}{\mathrm{Pa}}
\newcommand{\Ch}{\mathrm{Ch}}
\newcommand{\Dec}{\mathrm{De}}
\newcommand{\MB}{\mathrm{MB}}
\newcommand{\sepset}{\mathrm{SepSet}}
\theoremstyle{plain}
\newtheorem{theorem}{Theorem}[section]
\newtheorem{proposition}[theorem]{Proposition}
\theoremstyle{definition}
\newtheorem{definition}[theorem]{Definition}
\newtheorem{assumption}[theorem]{Assumption}
\newtheorem{example}[theorem]{Example}
\newtheorem{problem}[theorem]{Problem}
\theoremstyle{remark}
\icmltitlerunning{Scalable and Flexible Causal Discovery}
\begin{document}

\twocolumn[
\icmltitle{Scalable and Flexible Causal Discovery with an Efficient Test for Adjacency}

\begin{icmlauthorlist}
\icmlauthor{Alan Nawzad Amin}{yyy}
\icmlauthor{Andrew Gordon Wilson}{yyy}
\end{icmlauthorlist}

\icmlaffiliation{yyy}{New York University, New York, USA}

\icmlcorrespondingauthor{Alan Nawzad Amin}{alanamin@nyu.edu}

\icmlkeywords{Machine Learning, ICML}

\vskip 0.3in
]

\printAffiliationsAndNotice{} 
\begin{abstract}
To make accurate predictions, understand mechanisms, and design interventions in systems of many variables, we wish to learn causal graphs from large scale data.
Unfortunately the space of all possible causal graphs is enormous so scalably and accurately searching for the best fit to the data is a challenge.
In principle we could substantially decrease the search space, or learn the graph entirely, by testing the conditional independence of variables.
However, deciding if two variables are adjacent in a causal graph may require an exponential number of tests.
Here we build a scalable and flexible method to evaluate if two variables are adjacent in a causal graph, the Differentiable Adjacency Test (DAT).
DAT replaces an exponential number of tests with a provably equivalent relaxed problem.
It then solves this problem by training two neural networks.
We build a graph learning method based on DAT, DAT-Graph, that can also learn from data with interventions.
DAT-Graph can learn graphs of 1000 variables with state of the art accuracy.
Using the graph learned by DAT-Graph, we also build models that make much more accurate predictions of the effects of interventions on large scale RNA sequencing data.
\end{abstract}

\section{Introduction}
\label{intro}
Large scale studies have recently collected hundreds of thousands of measurements of thousands of variables and interventions across genetics, microbiology, and healthcare \cite{Van_Hout2020-rg, Regev2017-jk, Franzosa2019-us, Geiger-Schuller2023-ig, Replogle2022-cy, Dixit2016-yl}.
An algorithm that leverages this data to learn cause and effect must scale to many measurements and variables, flexibly accommodate complex interactions between variables, and make reliable predictions in realistic and large scale settings.

Modern state-of-the-art algorithms frame learning causal relationships as a model selection problem \cite{Chickering2002-rx, Van_de_Geer2013-ak}.
Each model corresponds to a directed acyclic graph representing which variables cause which others.
Complex relationships between variables are then modelled using flexible neural networks \cite{Lachapelle2019-jl, Zheng2020-wl}.
The central practical challenge of this approach is the model search, where one needs to explicitly search through the enormous space of all directed acyclic graphs. 
Recently, a number of gradient-based search methods have scaled this procedure to data of large complex systems \cite{Zheng2018-dc, Lachapelle2019-jl, Zheng2020-wl, Nazaret2023-yy}.
However, these heuristic search procedures can be unstable in practice and can be unreliable even in simple settings \cite{Wei2020-ju, Nazaret2023-yy, Deng2023-ix}. The model search also becomes exponentially harder as the number of variables increases.

To make more accurate predictions at scale we can shrink the search space, or avoid searching by learning the graph entirely, by taking the alternative approach of many classical graph learning algorithms. These approaches test the data for conditional independence relationships, and then exclude the corresponding edges from the graph \cite{Spirtes1993-ty}.
Indeed recently, by reducing the model search space with some limited testing, a gradient-based model search strategy demonstrated large gains in accuracy on data of large complex systems \cite{Nazaret2023-yy}.
And historically, by reducing the search space by testing as much as possible before performing model search, classical non-flexible ``hybrid'' causal models achieved state-of-the-art accuracy learning from data of simple systems \cite{Tsamardinos2006-zk, Buhlmann2014-do}.

Unfortunately, classical testing-based procedures struggle to scalably and reliably learn from data of large complex systems.
A particularly informative test, and the central step in virtually all classical testing-based procedures, is evaluating if two variables are immediate causes or effects of each other --- that is, if they are adjacent in the causal graph \cite{Spirtes1993-ty, Tsamardinos2006-zk}.
Evaluating this relationship for a pair of variables involves searching for a set of other variables that renders them conditionally independent (Fig.~\ref{fig: conceptual a}). 
\begin{figure}
    \centering
    \subfigure[][\label{fig: conceptual a}]{\includegraphics[width=0.41\columnwidth]{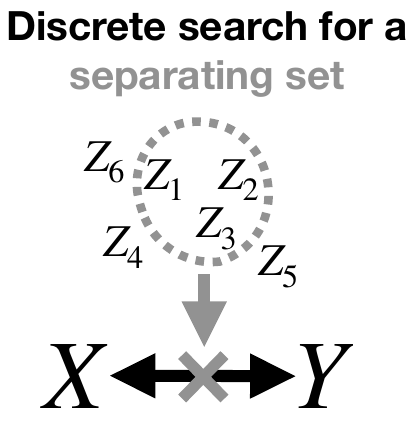}}
    \hspace{0.35cm}
    \subfigure[][\label{fig: conceptual b}]{\includegraphics[width=0.505\columnwidth]{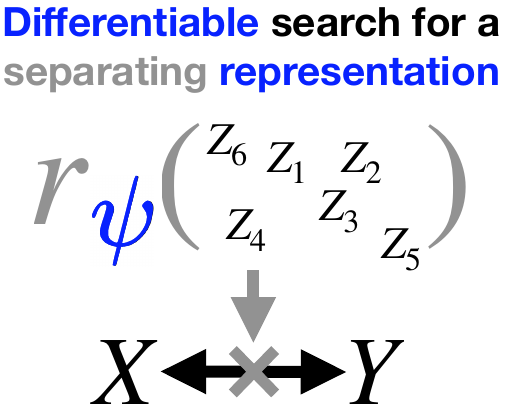}}
    \vspace{-.3cm}
    \caption{\textbf{Our Differentiable Adjacency Test (DAT) relaxes a discrete search requiring exponentially many tests (a) into a differentiable search to solve an optimization problem (b).}} \label{fig:conceptual}
    \vspace{-.6cm}
\end{figure}
For each pair this search can involve an exponential number of tests.
Because of the enormous number of tests, and the particularly large computational cost of flexible conditional independence tests \cite{Zhang2011-yw, Sen2017-op, Berrett2019-tx, Bellot2019-ln}, this procedure does not scale to data of large complex systems.
Furthermore, because realistic data can include many spurious borderline conditional independence relations --- so-called ``violations of faithfulness'' --- testing for adjacency can be unreliable even in simple settings \cite{Uhler2012-pz, Andersen2013-ak}.

Here we develop a testing-based graph learning procedure, DAT-Graph, that can scalably, flexibly, and reliably learn cause and effect from data of large and complex systems.
DAT-Graph is centred around a method for evaluating whether two variables are adjacent in the causal graph, DAT.
To do this flexibly and scalably, DAT replaces performing an exponential number of tests (Fig.~\ref{fig: conceptual a}) with optimizing a single differentiable objective (Fig.~\ref{fig: conceptual b}).
We build DAT carefully so that the testing and optimization problems are provably equivalent.
DAT-Graph learns the graph in a way that is reliable even when there are some violations of faithfulness;
in particular, since DAT is scalable, 
DAT-Graph can evaluate the adjacency of every pair of variables with two complementary tests in a way that is too computationally expensive for previous methods.
Empirically, we show that DAT-Graph is able to easily scale to learn on real and synthetic data with $10^3$ variables and $10^4$ observations.
We show DAT-Graph learns large sparse causal graphs more accurately than state of the art gradient-based model selection procedures with and without interventions.
We also use DAT-Graph to reduce the search space of model selection procedures to build even more accurate hybrid models.
We show that these hybrid models accurately predict the effects of interventions on large scale RNA sequencing data.

Our code is available at \url{https://github.com/AlanNawzadAmin/DAT-graph/}.

\vspace{-.2cm}
\section{Related work} \label{sec: related work}
\vspace{-.1cm}

There is an enormous history of methods to learn graphs in general.
We give a full review of these methods in Section~\ref{app: related work}.
Efforts to build testing-based graph learning procedures for data of large and complex systems have focused on building more flexible conditional independence tests using kernels \cite{Fukumizu2007-hv, Zhang2011-yw, Pogodin2022-dc}, simulation of conditional distributions \citep{Doran2014-gr, Sen2017-op, Berrett2019-tx, Bellot2019-ln}, or a different strategy \citep{Shah2020-vv, Polo2023-os, Laumann2023-nv};
or building more scalable tests \citep{Strobl2019-iv, Runge2018-tp} or reducing the number of tests by performing them in a clever order \citep{Margaritis1999-vu, Mokhtarian2020-lh}.
While these methods are accurate at small scale \citep{Pogodin2022-dc}, they unfortunately cannot scale to more than 100 variables.
DAT instead replaces an exponential number of tests with an equivalent relaxed problem.

The relaxed problem DAT considers is equivalent to the problem considered in Invariant Risk Minimization (IRM) \cite{Arjovsky2019-ml} when one of the variables is a discrete environment variable.
DAT however can accommodate continuous variables.
DAT also diverges strongly from IRM methodologically so that the answer to the relaxed problem is provably identical to the initial problem.

\vspace{-.3cm}
\section{Background}\label{sec: background}
\vspace{-.1cm}
We wish to learn causal relationships between $N$ real random variables $X^1, \dots, X^N$.
We model cause and effect relationships as functional relationships; that is, if we call $\Pa(X^n)$ the direct causes of the variable $X^n$ then we assume there is some function $h_n$ such that
\vspace{-0.5cm}

\begin{equation}\label{eq: functional}
    X^n = h_n\left(\epsilon_n, (X^m)_{m\in\Pa(X^n)}\right)
\end{equation}

\vspace{-0.2cm}
where $(\epsilon_n)_{n=1}^N$ are iid random variables independent from each other and $X^{1:N}$.
We represent causal relations in a graph $G$ with nodes $X^{1:N}$ where there is a directed edge from $X^n$ to $X^m$ if $X^m\in\Pa(X^n)$, that is, each variable $X^n$ is caused by its parents in $G$, $\Pa_G(X^n)$.
We assume that there are no cycles in this graph.

Now we wish to recover the graph $G$ from some observations $X_1^{1:N}, \dots, X_K^{1:N}$.
We assume the observations come iid from some distribution $p$, and defer observations with intervened variables until Section \ref{sec: interventions};
we call this the ``purely observational'' setting.
First note that in this setting the graph $G$ is only identifiable up to an equivalence class~\cite{Pearl2010-si}.
However, the skeleton of the graph -- a graph in which two nodes are connected by an undirected edge if they are adjacent in $G$ -- is identifiable \cite{Pearl2010-si}, and in many cases we can distinguish cause and effect for almost all adjacent variables in $G$ \cite{Katz2019-sw}.
Our goal is to learn a member of the equivalence class of $G$.

The causal relationships in Eqn~\ref{eq: functional} necessarily imply observable conditional independence relationships.
If the variables $X^{1:N}$ follow Eqn~\ref{eq: functional} then it must be the case that $X^n$, when conditioned on its parents $\{X^m\}_{m\in \Pa_G(X^n)}$, is independent of all nodes other than its decendents in $G$, $\Dec_G(X^n)$:
\vspace{-0.9cm}

\begin{equation}\label{eq: markov eqn}
    X^n\indep \{X^m\}_{m\not\in \Dec_G(X^n)}\ |\ \{X^m\}_{m\in \Pa_G(X^n)}.
\end{equation}

\vspace{-0.2cm}
These relationships in turn necessarily imply that any two sets of nodes $A, B\subset X^{1:N}$ that are d-separated in $G$ by a third set of nodes $C\subset X^{1:N}$ are conditionally independent $A\indep B\ |\ C$ \cite{Geiger1990-ix} (see \citet{Spirtes1993-ty} for a review of d-separation).
In the generic case, these are all of the conditional independence relationships of the nodes $X^{1:N}$, meaning that $p$ is faithful to the graph $G$ \cite{Geiger1990-ix, Uhler2012-pz}.
\begin{definition}
    A distribution $p$ over $X^{1:N}$ is faithful to a graph $G$ if for any three disjoint sets $A, B, C\subset X^{1:N}$, $A\indep B\ |\ C$ if and only if $C$ d-separates $A$ and $B$ in $G$.
\end{definition}
\vspace{-.2cm}
Our goal is to look for these conditional independence relationships and thereby learn about the topology of the causal graph.
Note however that even when $p$ is faithful to a graph $G$, there are in practice many subsets $A, B, C$ that nearly violate faithfulness -- that is, $C$ may not d-separate $A$ and $B$ but $A$ is ``almost'' independent of $B$ when conditioned on $C$ \cite{Uhler2012-pz, Andersen2013-ak}.
Therefore, to build a reliable method we would like to be robust to the presence of a few near violations of faithfulness.

A crucial observation is that two non-adjacent nodes in a directed acyclic graph can always be d-separated by some other set of nodes, while adjacent nodes can never be d-separated.
Thus if $p$ is faithful to $G$ then two nodes $X^n, X^m$ are adjacent if and only if there is no other set $S\subset \{1, \dots, N\}\setminus\{n, m\}$ such that $X^n\indep X^m\ |\ \{X^k\}_{k\in S}.$
Thus we can learn the skeleton of $G$ if we had a method to solve what we call the separating set selection problem.
\begin{problem}\label{prob: sepset}
    \textbf{(Separating set selection Fig.~\ref{fig:conceptual}(a))}
    Given a set of real random variables $X, Y, Z_1, \dots, Z_M$, is there a subset $S\subset\{1, \dots, M\}$ such that $X\indep Y|\{Z_m\}_{m\in S}$?
\end{problem}
\vspace{-.2cm}
If $\{Z_m\}_{m\in S}$ is any subset such that $X\indep Y|\{Z_m\}_{m\in S}$ then we call it a separating set of $X, Y$: $\sepset(X, Y)=\{Z_m\}_{m\in S}$.
If $p$ is faithful to $G$, then if we have the skeleton of $G$ and $\sepset(X^n, X^m)$ for any pair of nonadjacent nodes $X^n, X^m$ then we can determine the equivalence class of $G$ according to a set of rules \cite{Spirtes1993-ty}.
Thus, to learn $G$ from data of large complex systems all we need is a scalable method to solve the separating set selection problem that can flexibly represent complex relationships between $X, Y, Z_1, \dots, Z_m$.
\vspace{-.3cm}

\section{The differentiable adjacency test (DAT)} \label{sec: dat}
\vspace{-.1cm}
In this section we build a scalable, flexible, and reliable method to solve the separating set selection problem.
In Section~\ref{sec: discrete to diff search} we first relax the separating set selection problem to a differentiable problem -- the separating representation search problem.
We prove conditions under which the relaxed problem answers the separating set selection problem.
In Section~\ref{sec:NP-hardness} we prove that the separating set selection problem is NP-Hard so we unfortunately cannot guarantee that there are not cases where it is challenging to answer the relaxed problem.
Nevertheless, in Section~\ref{sec: diff objective} we build the Differentiable Adjacency Test (DAT), a practical method to efficiently approximately answer the separating representation search problem using neural networks to represent complex relations between variables.
Finally in Section~\ref{sec: dat results} we show the DAT solves the separating set problem as accurately as a classical testing approach while being orders of magnitude faster.
Throughout we use the term ``testing'' in the informal sense of a decision rule, without implying validity or coverage.

\vspace{-.3cm}
\subsection{Relaxing the discrete search}\label{sec: discrete to diff search}
\vspace{-.1cm}
The computational challenge of the separating set selection problem is the discrete search over all $2^M$ subsets of $\{1, \dots, M\}$.
We can relax the problem by replacing a search over subsets of $Z_{1:M}$ with a search for a representation $r_{\psi^*}(Z_{1:M})$
where $\{r_\psi\}_{\psi\in\Psi}$ is a differentiably parameterized family of functions that have domain $\Reals^M$.
\begin{problem}\label{prob: sepset diff}
    \textbf{(Separating representation search Fig.~\ref{fig:conceptual}(b))}
    Given a set of real random variables $X, Y, Z_1, \dots, Z_M$ and a class of possibly random functions $\{r_\psi\}_{\psi\in\Psi}$, is there a $\psi^*\in \Psi$ such that $X\indep Y|r_{\psi^*}(Z_{1:M})$?
\end{problem}
\vspace{-.2cm}
Indeed, the separating representation search problem is a natural relaxation of the separating set selection problem that has come up in other causal inference settings in the case that $Y$ is a discrete ``environment'' variable \cite{Arjovsky2019-ml, Shi2020-ji}.

In our setting we need to pick our representations $\{r_\psi\}_{\psi\in\Psi}$ so that 1) there is a separating representation if and only if there is a separating set and 2) we can get a separating set $\sepset(X, Y)$ from the separating representation parameter $\psi^*$.
Unfortunately, there are seemingly reasonable choices in simple situations in which these desiderata are not fulfilled.
\begin{example}\label{ex: r problem}
    \textbf{(Existence of a separating representation but no separating set)}
    There are jointly Gaussian variables $X, Y, Z_1, Z_2$ that are faithful to some graph such that \mbox{$X\notindep Y|\{Z_m\}_{m\in S}$} for any $S\subset\{1, 2\}$ but if $\{r_\psi\}_{\psi\in\Psi}$ is the space of linear functions, there is a $\psi^*$ such that $X\indep Y|r_{\psi^*}(Z_{1:2})$.
    \vspace{-.4cm}
\end{example}
\begin{proof}
    In Appendix~\ref{app: counterexamples}.
    \vspace{-.3cm}
\end{proof}
Unfortunately, by relaxing the problem, we have in effect increased the number of opportunities for there to be a violation of faithfulness from an exponential number -- checking $X\indep Y|\{Z_m\}_{m\in S}$ for all subsets $S$ -- to an infinite number -- checking \mbox{$X\indep Y|r_{\psi}(Z_m)_{m=1}^M$} for all $\psi$.

To build a method with our desiderata, our strategy will be to restrict what $\{r_{\psi}\}_{\psi}$ can represent.
We choose $r_\psi$ to only represent ``soft'' subsets of $Z_{1:M}$ where each variable $Z_m$ is softly included in the separating set by mixing it with independent noise.
Let $N_1, \dots, N_M$ be drawn independently from distributions with densities $f_1, \dots, f_M$.
For $\psi=(\psi_1, \dots, \psi_M)$ with $\psi_m\in[0, 1]$, we define the noised variable
$$\tilde Z_{\psi, m} = \psi_m Z_m + (1-\psi_m)N_m$$
and the representation $r_\psi(Z_{1:M})=\tilde Z_{\psi, 1:M}$.
By mixing $Z_m$ with an independent random variable $N_m$ we lose information about $Z_m$.
$\psi_m$ controls how much information we observe about $Z_m$; when $\psi_m=0$ we do not observe $Z_m$ and when $\psi_m=1$ we observe $Z_m$ fully.

With this choice, one can still design $f_1, \dots, f_M$ to get disagreeing answers between the separating set selection and separating representation search problems (see Example~\ref{ex: noise problem}).
However we prove that if we pick $f_1, \dots, f_M$ to have thick tails then the answer to the two problems will be identical and we can recover a separating set from a separating representation as desired.
\begin{theorem}\label{Thm: main reliability}
    (Proof in Appendix~\ref{app: main proof})
    Assume Assumption~\ref{ass: tail assump} (pick $f_m$ to have thicker tails than $p$).
    The separating set selection problem and the separating representation search problem have the same answer.
    If $\tilde Z_{\psi^*, 1:M}$ is a separating representation then $\{Z_m\}_{\psi_m^*=1}$ is a separating set.
\end{theorem}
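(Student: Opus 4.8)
The plan is to prove both directions of the claimed equivalence, where one direction is immediate and the whole difficulty is concentrated in a single limiting argument powered by the tail assumption. First I would dispatch the direction ``separating set $\Rightarrow$ separating representation.'' Given a separating set $\{Z_m\}_{m\in S}$, set $\psi_m=1$ for $m\in S$ and $\psi_m=0$ otherwise, so that $\tilde Z_{\psi,m}=Z_m$ on $S$ and $\tilde Z_{\psi,m}=N_m$ off $S$. Since each $N_m$ is drawn independently of $(X,Y,Z_{1:M})$, conditioning on the pure-noise coordinates is vacuous, and $X\indep Y\,|\,\tilde Z_{\psi,1:M}$ collapses to $X\indep Y\,|\,\{Z_m\}_{m\in S}$, which holds by assumption. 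This already yields one implication, and it shows that everything else follows from the second, quantitative claim: if $\psi^*$ is a separating representation then $A:=\{m:\psi_m^*=1\}$ is a separating set. Establishing that claim delivers the converse implication and the explicit set-recovery statement at once.

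For the hard direction I would partition $\{1,\dots,M\}$ into $A=\{\psi_m^*=1\}$, $B=\{\psi_m^*=0\}$, and $C=\{0<\psi_m^*<1\}$. On $A$ we have $\tilde Z_{\psi^*,m}=Z_m$; on $B$, $\tilde Z_{\psi^*,m}=N_m$ is independent of everything and drops out of the conditioning set. Hence the hypothesis $X\indep Y\,|\,\tilde Z_{\psi^*,1:M}$ is equivalent to $X\indep Y\,|\,Z_A,\tilde Z_C$, and the goal reduces to removing the partially-noised block $\tilde Z_C$ to obtain $X\indep Y\,|\,Z_A$. The engine is a limiting lemma: conditioning on a thick-tail-noised variable, in the limit as its observed value tends to infinity, coincides with not conditioning on it at all.

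To run this argument, I would use that $\tilde Z_C$ is a deterministic function of $Z_C$ and external noise independent of $(X,Y,Z)$, so for each fixed $z_A$ and each value $t$ of $\tilde Z_C$,
\[
p(x,y\mid z_A,\tilde Z_C=t)=\int p(x,y\mid z_A,z_C)\,p(z_C\mid z_A,\tilde Z_C=t)\,dz_C,
\]
with posterior $p(z_C\mid z_A,\tilde Z_C=t)\propto p(z_C\mid z_A)\prod_{m\in C}f_m(\tfrac{t_m-\psi_m^* z_m}{1-\psi_m^*})$. I would then send every coordinate of $t$ to infinity, normalizing each factor by $f_m(\tfrac{t_m}{1-\psi_m^*})$. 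Assumption~\ref{ass: tail assump} guarantees that each factor becomes asymptotically flat in $z_m$ relative to the prior, so the posterior converges to $p(z_C\mid z_A)$ and therefore $p(x,y\mid z_A,\tilde Z_C=t)\to p(x,y\mid z_A)$; the same limit holds for the two marginals $p(x\mid\cdot)$ and $p(y\mid\cdot)$. Passing the identity $p(x,y\mid z_A,\tilde Z_C=t)=p(x\mid z_A,\tilde Z_C=t)\,p(y\mid z_A,\tilde Z_C=t)$ to the limit gives $X\indep Y\,|\,Z_A$. One can either take all coordinates to infinity simultaneously or peel off one $m\in C$ at a time by induction, treating $Z_A,\tilde Z_{C\setminus\{m\}}$ as the fixed conditioning set.

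The delicate step, and the one I expect to be the main obstacle, is justifying the interchange of limit and integral in the normalization of the posterior: one must rule out that the mass of $p(z_C\mid z_A)\prod_m f_m(\cdots)$ develops a spurious bump near $z_m\approx t_m/\psi_m^*$, where the noise density peaks, rather than staying where the prior lives. Controlling that bump is exactly a tail comparison: since $z_m\approx t_m/\psi_m^*$ drifts to infinity, the prior $p$ must decay there fast enough to be swamped by the thick-tailed value $f_m(\tfrac{t_m}{1-\psi_m^*})$ contributed at moderate $z_m$. This is precisely what ``$f_m$ has thicker tails than $p$'' should supply, via a dominated-convergence argument producing a uniform integrable envelope for the family $z_C\mapsto p(z_C\mid z_A)\prod_m f_m(\tfrac{t_m-\psi_m^* z_m}{1-\psi_m^*})/f_m(\tfrac{t_m}{1-\psi_m^*})$; and Example~\ref{ex: noise problem} confirms that the conclusion genuinely fails when the tails are not thick enough. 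I would therefore expect the bulk of the rigorous work to lie in reading off this envelope from Assumption~\ref{ass: tail assump}.
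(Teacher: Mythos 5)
Your proposal is correct and takes essentially the same approach as the paper's proof: the easy direction via the indicator choice of $\psi$, and the hard direction via the same limiting lemma --- the posterior over the partially-noised coordinates converges to the prior as the observed noised values tend to infinity, so the factorization of $p(x,y\mid z_A,\tilde Z_C=t)$ passes to the limit --- with the tail assumption ruling out posterior mass migrating to $z_m\approx t_m/\psi_m^*$. The paper carries out your dominated-convergence/envelope step as an explicit total-variation bound, splitting the integral into a ball $\Vert z\Vert_\infty\le R$ (where the log-derivative condition of Assumption~\ref{ass: tail assump} makes the normalized noise factors asymptotically flat) and annuli outside (controlled by the summability condition in Eqn.~\ref{eq: thin tails}), with $R\to\infty$ slowly as the observations grow.
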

\vspace{-.3cm}

\subsection{Hardness of adjacency testing}\label{sec:NP-hardness}
\vspace{-.1cm}
In Thm.~\ref{Thm: main reliability} we showed that if we find a separating representation with parameter $\psi^*$ then we obtain a separating set.
Can we build an efficient method in practice that is guaranteed to find $\psi^*$?
We answer this question negatively.
\begin{proposition}\label{prop: np hard}
    (Proof in Appendix~\ref{app: proof np hard})
    Even when restricted to the case where $X, Y, Z_1, \dots,Z_M$ are jointly Gaussian with known non-singular covariance matrix, the separating set selection problem is NP-Hard.
    \vspace{-.2cm}
\end{proposition}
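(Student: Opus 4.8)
The plan is to prove hardness by a polynomial-time reduction from \textsc{Subset Sum}, which is NP-complete: given positive integers $a_1,\dots,a_M$ and a target $T$ (we may assume $1\le T\le\sum_m a_m$, as the other cases are trivial), decide whether some $S\subset\{1,\dots,M\}$ satisfies $\sum_{m\in S}a_m=T$. The bridge to the Gaussian setting is the classical fact that for jointly Gaussian variables, $X\indep Y\mid\{Z_m\}_{m\in S}$ holds if and only if the partial covariance vanishes, and that this partial covariance is the Schur complement
\[
\cov\!\left(X,Y\mid\{Z_m\}_{m\in S}\right)=\Sigma_{X,Y}-\Sigma_{X,S}\,\Sigma_{S,S}^{-1}\,\Sigma_{S,Y}.
\]
So the separating set selection problem for a Gaussian instance is exactly the question of whether some index set $S$ makes this expression zero, and I will engineer the covariance so that this happens iff the subset sum is met.

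Next I would exhibit the construction. Let $Z_1,\dots,Z_M$ be independent standard Gaussians, and set
\[
X=\sum_{m=1}^M a_m Z_m+\eta_X,\qquad Y=\sum_{m=1}^M Z_m+\eta_Y,
\]
where $(\eta_X,\eta_Y)$ is jointly Gaussian and independent of $Z_{1:M}$. Then $\Sigma_{S,S}=I$, $\cov(X,Z_m)=a_m$, and $\cov(Y,Z_m)=1$, so the Schur complement collapses to $\cov(X,Y)-\sum_{m\in S}a_m$. Choosing $\cov(\eta_X,\eta_Y)=T-\sum_m a_m$ gives $\cov(X,Y)=T$, hence
\[
\cov\!\left(X,Y\mid\{Z_m\}_{m\in S}\right)=T-\sum_{m\in S}a_m,
\]
which is zero precisely when $\sum_{m\in S}a_m=T$. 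Thus a separating set exists iff the \textsc{Subset Sum} instance is a yes-instance, and all covariance entries are integers computed directly from the instance, so the reduction is polynomial.

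It then remains to certify that the covariance matrix is non-singular, as the proposition explicitly requires. I would take $\mathrm{Var}(\eta_X)=\mathrm{Var}(\eta_Y)=\sigma^2$ with $\sigma^2>\bigl|T-\sum_m a_m\bigr|$, so the $2\times2$ block $\Sigma_{\eta}$ is positive definite. Because conditioning on $Z_{1:M}$ removes the linear part and leaves only $(\eta_X,\eta_Y)$, the Schur complement of the identity $Z$-block in the full covariance matrix equals $\Sigma_{\eta}$; positive definiteness of both the $Z$-block and this Schur complement yields positive definiteness, and hence non-singularity, of the entire matrix.

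I expect the only delicate part to be bookkeeping rather than a genuine mathematical obstacle: one must invoke the Gaussian equivalence between conditional independence and vanishing partial covariance correctly, and simultaneously guarantee the matrix is strictly positive definite (ruling out degenerate instances that the singular case might otherwise trivialize) while keeping every entry rational and the encoding polynomial in the input size. Once the Schur-complement identity and the positive-definiteness certificate are in place, the correspondence with subset sum is immediate and the NP-hardness conclusion follows.
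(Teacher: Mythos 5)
Your proof is correct and takes essentially the same approach as the paper's: a reduction from Subset Sum in which the Gaussian partial covariance given $\{Z_m\}_{m\in S}$ evaluates to $T-\sum_{m\in S}a_m$ (the paper's variant instead yields $-T+\sum_{m\notin S}a_m$, so the complement of $S$ is the certificate). Your only addition is the explicit positive-definiteness check via the Schur complement, which the paper leaves implicit since its $(X,Y,Z_{1:M})$ is an invertible linear image of independent Gaussians and hence automatically has non-singular covariance.
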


There may therefore be cases where, by failing to find $\psi^*$, we may incorrectly determine that two variables are adjacent in $G$.
Nevertheless, we aim to provide a method that gives accurate approximate solutions with reasonable compute.
\vspace{-.2cm}

\subsection{Differentiable Adjacency Test}\label{sec: diff objective}
\vspace{-.1cm}
We now build a method to search for a separating representation in practice.
Our method, DAT, will perform this search by minimizing a differentiable objective.
\vspace{-.1cm}

To replace the separating representation search problem with a differentiable optimization problem, we need to replace $X\indep Y|\tilde Z_{\psi}$, where we write $\tilde Z_{\psi}=\tilde Z_{\psi, 1:M}$, with a differentiable objective $L(\psi)$ that reaches its minimum if and only if $X\indep Y|\tilde Z_{\psi}$.
While a number of flexible measures of conditional independence exist \cite{Zhang2011-yw, Pogodin2022-dc, Bellot2019-ln}, we pick $L(\psi)$ to be the ``variance of $X$ explained by $Y$ when conditioned on $\tilde Z_{\psi}$'' as it is a well studied measure of conditional independence \cite{Zhang2020-kl, Polo2023-os} that is easy to optimize in practice and allows us to model the relationships between variables with scalable and flexible neural networks:

\vspace{-0.8cm}
\begin{equation*}
    \begin{aligned}
        E\mathbb{V}(X; Y|&\tilde Z_{\psi})=\\
        &E[X - E[X|\tilde Z_{\psi}]]^2 - E[X-E[X|Y, \tilde Z_{\psi}]]^2.
    \end{aligned}
\end{equation*}

\vspace{-.4cm}
This quantity is always non-negative and if $X\indep Y|Z$ then it is $0$.
On the other hand, if $E\mathbb{V}(T(X); Y|\tilde Z_{\psi})=0$ for all bounded functions $T$, then $X\indep Y|Z$.
We increase the ability of this metric to detect that \mbox{$X\notindep Y|\tilde Z_{\psi}$} by evaluating the variance explained of more than one statistic of $X$, $T_1(X), \dots, T_C(X)$,
\vspace{-0.5cm}

\begin{equation}
    \begin{aligned}
        \sum_{c=1}^CE\mathbb{V}(T_c(X); Y|\tilde Z_{\psi}).
    \end{aligned}
\end{equation}

\vspace{-0.3cm}
In experiments we choose $C=2$ and $T_1, T_2$ as the first and second moments of $X$.
For clarity we write the rest of the section as if $C=1$ and $T_1(X)=X$.
\vspace{-.1cm}

The variance explained cannot be exactly calculated, so we must approximate it.
We first approximate $E[X|\tilde Z_{\psi}]$ with a neural network $g_{\theta_1}:\Reals^M\to \Reals$ by training it to minimize the objective
\vspace{-0.7cm}

\begin{equation*}\label{eq:aprox objective g1}
    \begin{aligned}
        L_1(\theta_1) = E\left[X-g_{\theta_1}(\tilde Z_{\psi})\right]^2.
    \end{aligned}
\end{equation*}

\vspace{-0.3cm}
Then, calling the residue $R=X-E[X|\tilde Z_{\psi}]\approx X - g_{\theta_1}(\tilde Z_{\psi})$, we approximate $E[R|Y, Z]$ with a neural network $g_{\theta_2}:\Reals^{M+1}\to\Reals$ by training it to minimize the objective
\vspace{-0.4cm}

\begin{equation*}\label{eq:aprox objective g2}
    \begin{aligned}
        L_2(\theta_2) = E\left[X - g_{\theta_1}(\tilde Z_{\psi}) -g_{\theta_2}(Y, \tilde Z_{\psi})\right]^2.
    \end{aligned}
\end{equation*}

\vspace{-0.2cm}
Finally we optimize $\psi$ to minimize the approximate variance explained,
\vspace{-0.7cm}

\begin{equation*}\label{eq:aprox objective r}
    \begin{aligned}
        &E\mathbb{V}(X; Y|\tilde Z_{\psi})=\\
        =&E[X - E[X|\tilde Z_{\psi}]]^2 - E[R-E[R|Y, \tilde Z_{\psi}]]^2\\
        \approx &E[X-g_{\theta_1}(\tilde Z_{\psi})]^2 - E[X-g_{\theta_1}(\tilde Z_{\psi})-g_{\theta_2}(Y, \tilde Z_{\psi})]^2\\
        =&L_{\mathrm{DAT}}(\psi).
    \end{aligned}
\end{equation*}

\vspace{-0.4cm}
We call this optimization problem the Differentiable Adjacency Test (DAT).
We optimize by gradient descent, alternately updating $\theta_1, \theta_2, \psi$ by taking gradient steps $\nabla_{\theta_1}L_1, \nabla_{\theta_2}L_2, \nabla_{\psi}L_{\mathrm{DAT}}$ and approximating the expectations with mini-batches of the data.
For every data point in a mini-batch we draw independent noise $N_1, \dots, N_M$ to calculate $\tilde Z_\psi$.
Training can be framed as a two player game between $\{\theta_1, \theta_2\}$ and $\psi$, which is not guaranteed to converge to an optimum.
In practice however, this is not a challenging optimization problem as the number of variables of one of the players $\psi_{1:M}$ is small
    -- we do not observe the challenges with optimizing multiplayer games such as cycles or instability.
\vspace{-0.1cm}

Once we have finished training $\theta_1^*, \theta_2^*, \psi^*$, in theory we need to evaluate if $L_{\mathrm{DAT}}(\psi^*)=0$ and choose $\sepset(X, Y)=\{Z_m\}_{\psi_m=1}$ if so.
In practice, we first choose two thresholds $\eta_1, \eta_2>0$, and we use a mini-batch of data to get an approximation of the variance explained $\hat L_{\mathrm{DAT}}(\psi^*)$.
Then we decide $X\indep Y|\tilde Z_{\psi^*}$ if and only if $\hat L_{\mathrm{DAT}}(\psi^*)<\eta_1$ and we set $\sepset(X, Y)=\{Z_m\}_{\psi_m>\eta_2}$
\vspace{-0.2cm}
\subsection{DAT is accurate and efficient in practice}\label{sec: dat results}
\vspace{-0.1cm}
Before using the DAT to learn an entire graph, we evaluate its ability to solve the separating set selection problem.
We compare the DAT with the classical method to solve the separating set selection problem: test whether $X$ is independent of $Y$ given $S$ for each subset $S$ of $Z_{1:M}$ and conclude that there is a separating set if the maximum of a test statistic across all tests is above a threshold; then infer that the $S$ for which the test statistic is maximized is a separating set.

We generated small Erdos-Renyi random graphs such that each node had an average of one parent and selected $X$ and $Y$ to be two random nodes and $Z_{1:M}$ to be all other nodes.
We then generated 10000 data points from the graph with two layer neural networks as functional relationships as described in Section~\ref{sec: experiments}.
We consider five conditional independence tests: 
\textbf{RCoT} \citep{Strobl2019-iv}:
a scalable and flexible kernel test for conditional independence.
\textbf{GCIT} \citep{Bellot2019-ln}: A flexible simulation-based conditional independence testing method; it learns a conditional distribution by training a GAN. 
\textbf{CCIT} \citep{Sen2017-op}: A scalable and flexible simulation based independence testing method; it simulates a conditional distribution by a nearest-neighbors search. 
\textbf{CMIT} \citep{Runge2018-tp}: A flexible method based on estimating conditional mutual information by looking for nearest neighbors. 
\textbf{AT\_discrete}: A method to test the conditional independence of two variables by estimating the conditional variance explained; it is exactly DAT without a differentiable search. 
Due to compute limitations, we were only able to run GCIT, CMIT, and AT\_discrete to $M=3$, while CCIT could scale to $M=7$ and RCoT to $M=11$.

\begin{figure}
    \centering
    \includegraphics[trim={0.6cm 0.cm 0.6cm 0.cm}, width=0.9\columnwidth]{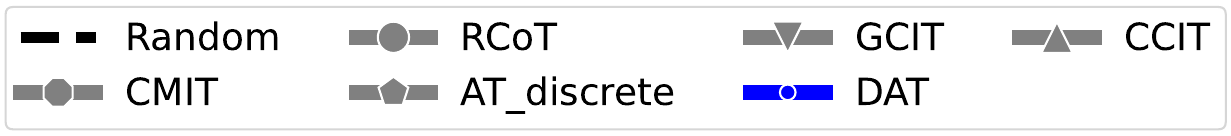}
    \subfigure[][\label{fig: dat acc} DAT accuracy]{
    \includegraphics[trim={0.6cm 0.6cm 0.cm 0.6cm}, width=0.44\columnwidth]{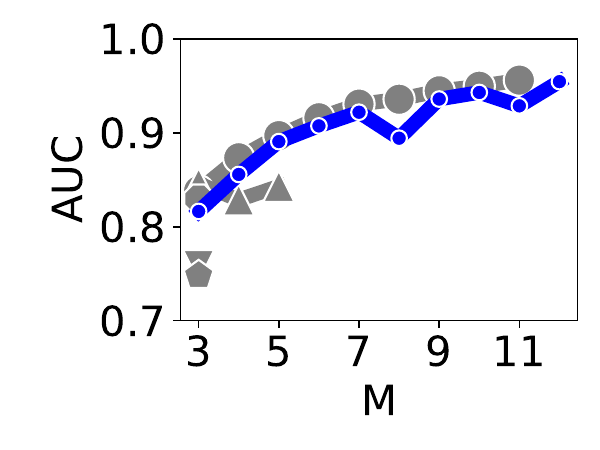}
    \includegraphics[trim={0.6cm 0.6cm 0.cm 0.6cm}, width=0.44\columnwidth]{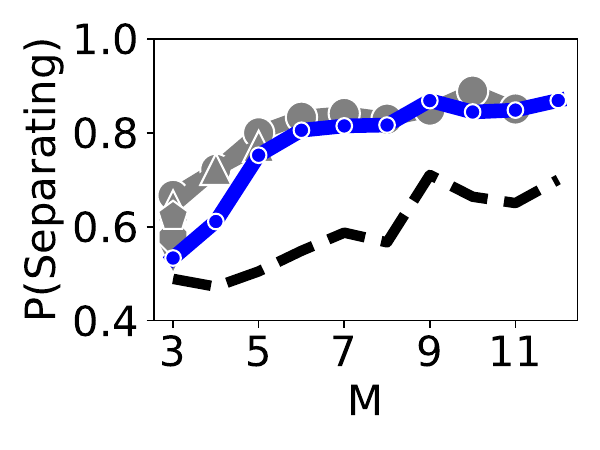}
    }
    \subfigure[][\label{fig: dat sep set} DAT efficiency]{
    \hspace{0.1cm}
    \includegraphics[trim={0.16cm 0.55cm 0.1cm 0.3cm}, width=0.44\columnwidth]{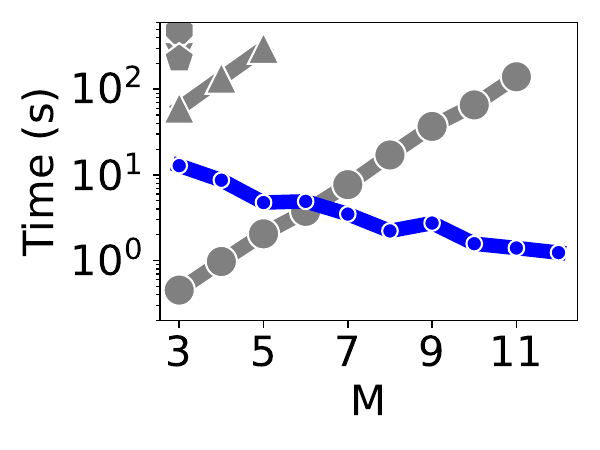}
    \hspace{-0.05cm}
    \includegraphics[trim={0.6cm 0.45cm 0.1cm 0.3cm}, width=0.434\columnwidth]
    {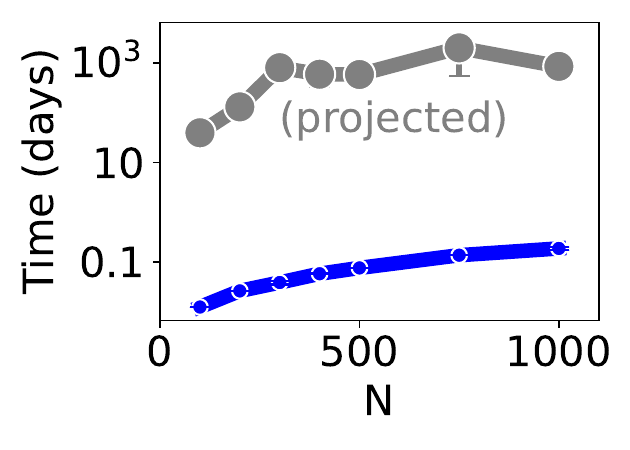}
    \hspace{0.05cm}
    }
    \vspace{-.3cm}
    \caption{\textbf{DAT enables learning large graphs by solving the separating set selection problem accurately and efficiently.} 
    (a) We plot how accurately each method determines if two variables are adjacent (AUC) and how often it corretly identifies a separating set for two non-adjacent variables (P(Separating)) against number of variables ($M$).
    (b) We plot the time of running each method against number of variables ($M$) and the time it would take to use each method to learn a large graph against the size of a graph ($N$).
    We plot the mean and standard error across 3 replicates. 
    } \label{fig: dat acc and scale}
    \vspace{-.6cm}
\end{figure}

We benchmark the accuracy of each method in classifying adjacent variables by the Area Under the receiver operator Curve (AUC) of the test statistic.
In Fig.~\ref{fig: dat acc} (left), we show that DAT is nearly as accurate as state of the art conditional independence tests in determining if two variables are adjacent.
In Fig.~\ref{fig: dat acc} (right) we see that DAT also identifies a separating set nearly as accurately as classical methods.

In Fig.~\ref{fig: dat sep set} (left) we show that all testing methods scale exponentially in compute with $M$ while DAT does not\footnote{In theory DAT scales linearly with $M$ with a large constant overhead for data transfers to the GPU; for small $M$ however, DAT can parallelized testing many edges at once on a GPU and reduce the computation per edge.}.
Finally we show that the scalability of DAT enables us to accurately learn large graphs.
We extrapolate the exponential scaling of Fig.~\ref{fig: dat sep set} (right) of the most scalable test, RCoT, and project how much time it would take to learn large graphs swapping DAT with RCoT in our experiments in Section~\ref{sec: experiments} (here $M$ could reach 30). 
We see in Fig.~\ref{fig: dat sep set} (right) that it would take weeks or years to learn graphs with many variables $N$ with classical tests while DAT took minutes to hours in our experiments.
\vspace{-.3cm}

\section{Learning a graph with DAT (DAT-Graph)}\label{sec: dat-graph}
\vspace{-.1cm}
In this section we use DAT to build a scalable, flexible, and reliable method to learn a causal graph $G$ from data of large, complex systems.
We call this method DAT-Graph.

In principle we could learn the graph by testing the adjacency of every pair of nodes using DAT.
This would involve $N^2$ tests, each of which involves a search over $N-2$ variables.
This strategy is clearly not scalable, and is also unreliable under possible near violations of faithfulness.
Instead we use a strategy employed by a large number of hybrid and testing-based graph-learning methods to first efficiently and reliably exclude a large number of edges from $G$ \cite{Margaritis1999-vu, Mokhtarian2020-lh, Buhlmann2014-do, Nazaret2023-yy}.
The idea is to try and predict the distribution of each variable $X^n$ using all other variables -- if we then determine that a variable $X^m$ is not useful in predicting $X^n$ we can conclude that it cannot be connected to $X^n$ in $G$.
Then we only need to solve the separating set selection problem to test and orient the substantially reduced number of remaining edges.
This first step also reduces the search space of each separating set selection problem \cite{Margaritis1999-vu, Mokhtarian2020-lh}.

However following previous methods by performing just these tests still leaves DAT-Graph unreliable when there are near violations of faithfulness.
To greatly increase its reliability in theory and in practice, unlike previous methods, DAT-Graph evaluates the adjacency of each pair of nodes by solving two separating set selection problems with different search spaces; it is able to perform both tests efficiently due to the scalability of DAT.

In Section~\ref{sec: moral graph} we describe how we learn the moral graph -- the graph that represents which variables are useful for predicting which others.
In Section~\ref{sec: skel graph} we describe how learning the moral graph reduces the search space for each adjacency test; we also describe how we perform DAT twice to test the adjacency of each pair of variables.
In Section~\ref{sec: datgraph scaling} we discuss the computational complexity of DAT-Graph.

Once we have learned the skeleton of $G$, we need to orient its edges.
We review how to do so using the separating sets we have calculated from DAT and standard rules from classical testing methods \cite{Spirtes1993-ty} in Appendix~\ref{sec: full graph}.
\vspace{-.2cm}

\subsection{Learning the moral graph}\label{sec: moral graph}
\vspace{-.1cm}
The first step in DAT-Graph is to exclude edges between variables that are not useful in predicting each other.
To do so, we must identify, for each variable $X^n$, the sparsest set in $\{X^m\}_{m\neq n}$ that can predict the distribution of $X^n$.
This sparsest set is known as the Markov blanket of $X^n$.
\begin{definition}\label{def: moral graph}
    A Markov blanket of a variable $X^n$ is a smallest subset $\MB(X^n)\subset\{X^m\}_{m\neq n}$ that makes $X^n$ conditionally independent of all other variables
    $$X\indep \{X^m\}_{m\neq n}\setminus \MB(X^n)\ |\ \MB(X^n).$$
    \vspace{-.8cm}
\end{definition}
The moral graph is the graph with an undirected edge between variables $X^n$ and $X^m$ if $X^n\in\MB(X^m)$.
When $p$ is faithful, the Markov blanket of a variable $X^n$ is a unique set consisting of all the variables that are adjacent to $X^n$ in $G$ as well as ``spouses'' of $X^n$ -- variables that are not adjacent to $X^n$ in $G$ but share a direct child with $X^n$ \cite{Spirtes1993-ty}.
Thus edges that are not in the moral graph also are also not in the skeleton of $G$.
If we can learn the moral graph we can therefore exclude many edges from $G$.

\citet{Schmidt2007-cr}, \citet{Buhlmann2014-do}, and \citet{Nazaret2023-yy} have shown that the Markov blanket of $X^n$ is the solution to any sparse variable selection procedure.
To solve sparse variable selection there are an enormous number of scalable, flexible, and reliable algorithms.
We adapt a variable selection method from \citet{Nazaret2023-yy} that allows us to flexibly model causal relationships using neural networks.
For each $n$ we predict $X^n$ using all other variables $\{X^m\}_{m\neq n}$ using a neural network $g_{\theta_n}:\Reals^{N-1}\to \Reals$.
We encourage $g_{\theta_n}$ to predict the expectation of $X^n$ using a sparse set of variables in $\{X^m\}_{m\neq n}$ by L$1$-regularizing the weights of its first layer.
In particular, calling $W^n$ the first layer weights of the network, $g_{\theta_n}$ is trained to minimize the objective
\vspace{-.2cm}
$$L_n(\theta_n)=E\left[X^n - g_{\theta_n}(X^m)_{m\neq n}\right]^2 + \lambda\sum_{i, j}|W^n_{i, j}|.$$

\vspace{-.5cm}
After training $g_{\theta_n}$ we can get a measure of the importance of $X^m$ in predicting $X^n$ as $\alpha_{X^n, X^m}=\sum_{i} (W^{n}_{i, m})^2$.
To get a moral graph in practice, we pick a threshold $\eta_3$ and connect $X^n$ and $X^m$ if $\alpha_{X^n, X^m} + \alpha_{X^m, X^n} > \eta_3$, that is, if $X^n$ is predicted to be in $\MB(X^m)$ or $X^m$ is predicted to be in $\MB(X^n)$.

It is possible that $g_{\theta_n}$ may erroneously ignore a variable $X^m$ which affects $X^n$ without changing its expectation.
To avoid this, as in Section~\ref{sec: diff objective}, we can train $g_{\theta_n}$ to predict the expectation of multiple statistics $T_1(X^n), \dots, T_C(X^n)$.
Again, in experiments we choose $C=2$ and $T_1, T_2$ as the first and second moments of $X$.

\subsection{Testing adjacency with two DATs} \label{sec: skel graph}
\vspace{-.1cm}
With knowledge of the moral graph, we can reduce the number of adjacency tests we must perform to learn the skeleton of $G$.
We can also reduce the search space for each test we perform:
the following proposition adapted from from \citet{Margaritis1999-vu} states that to test if two variables are adjacent in $G$,
instead of searching for a separating set in the set of all other variables, we can restrict our search to the Markov blanket of one of the variables.
\begin{proposition}\label{prop: reduce ad to moral}
    \textbf{\citep{Margaritis1999-vu}}
    (Proof in Appendix~\ref{app: proof reduce ad to moral})
    Assume $p$ is faithful. $X^n$ and $X^m$ are adjacent in $G$ if and only $X^n\notindep X^m\ |\ U$ for any $U\subset\MB(X^n)$.
    \vspace{-.1cm}
\end{proposition}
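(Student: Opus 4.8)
The plan is to use faithfulness to turn both directions into statements about d-separation in $G$, and then argue purely at the level of the graph. Since $p$ is faithful, for any $U\subset\{X^k\}_{k\neq n,m}$ we have $X^n\indep X^m\,|\,U$ if and only if $U$ d-separates $X^n$ and $X^m$ in $G$. (Throughout I read ``for any $U\subset\MB(X^n)$'' as ranging over $U$ not containing $X^m$, since conditioning on $X^m$ is degenerate.) The easy direction is then immediate: a directed edge between adjacent nodes is itself a path with no intermediate node, so no conditioning set can block it, and adjacent nodes can never be d-separated. Hence if $X^n,X^m$ are adjacent then $X^n\notindep X^m\,|\,U$ for every $U$, in particular for every $U\subset\MB(X^n)$.

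For the substantive direction I would show that if $X^n$ and $X^m$ are non-adjacent then some $U\subset\MB(X^n)\setminus\{X^m\}$ d-separates them. When $X^m\notin\MB(X^n)$ this is immediate: taking $U=\MB(X^n)$, the defining property of the Markov blanket (Def.~\ref{def: moral graph}) gives $X^n\indep X^m\,|\,\MB(X^n)$. The remaining case is $X^m\in\MB(X^n)$; since $X^m$ is non-adjacent it must be a spouse of $X^n$, so they share at least one child. Here the naive choice $U=\MB(X^n)\setminus\{X^m\}$ can fail, because conditioning on a shared child $c$ (or on a descendant of $c$) activates the collider $X^n\to c\leftarrow X^m$ and opens a path. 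I would therefore take
$$U=\MB(X^n)\setminus\big(\{X^m\}\cup D\cup\Dec(D)\big),\qquad D=\Ch(X^n)\cap\Ch(X^m),$$
the Markov blanket of $X^n$ with $X^m$, the shared children $D$, and their descendants removed.

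To verify that this $U$ d-separates $X^n$ and $X^m$ I would invoke the standard moralization criterion: $U$ d-separates $X^n$ and $X^m$ if and only if it separates them in the moral graph of the subgraph of $G$ induced on $\mathrm{An}(\{X^n,X^m\}\cup U)$. Two observations make this clean. First, no shared child $c\in D$ lies in this ancestral set: $c$ is not an ancestor of $X^n$ or $X^m$ by acyclicity, and no element of $U$ is a descendant of $c$ by construction, so $c$ adds no moralizing edge between $X^n$ and $X^m$; as $X^n,X^m$ are non-adjacent in $G$ they remain non-adjacent in this moral graph. Second, the moral-graph neighbours of $X^n$ are contained in $\MB(X^n)$, so any path leaving $X^n$ must first enter a node of $\MB(X^n)$ present in the ancestral set; every such node other than the excluded ones lies in $U$ and is conditioned on (blocking the path), while the excluded shared children and their descendants are absent from the ancestral graph altogether. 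Hence $U$ separates $X^n$ from $X^m$, and by faithfulness $X^n\indep X^m\,|\,U$.

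I expect the main obstacle to be exactly this last verification: the collider bookkeeping that forces us to delete not just $X^m$ but the shared children and \emph{their descendants}. The moralization criterion is what keeps it tractable, since it reduces d-separation to ordinary separation in an undirected graph whose only delicate feature is whether a shared child enters the ancestral set. A direct path-by-path argument is error-prone here, as a careless $U$ (for instance, keeping a descendant of a shared child) silently reopens a collider path through that child.
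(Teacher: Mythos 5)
Your proof is correct, but it takes a genuinely different route from the paper's. The paper splits on whether $X^m$ is a descendant of $X^n$: if not, it takes $U=\Pa_G(X^n)$ and invokes the local Markov property directly; if so, it builds a bottom-up separating set $U=\Pa_G(X^n)\cup\{\text{children of }X^n\text{ that are ancestors of }X^m\}\cup\{\text{parents of those children}\}$ and derives a contradiction by tracking a hypothetical d-connecting path (the first edge must leave $X^n$, the first collider must have a descendant in $U$, every element of $U$ is an ancestor of $X^m$, forcing the first node after $X^n$ to be an ancestor of $X^m$, which is then ruled out). You instead use a single, case-free, top-down set $U=\MB(X^n)\setminus(\{X^m\}\cup D\cup\Dec(D))$ with $D=\Ch(X^n)\cap\Ch(X^m)$, and verify d-separation via the moralization criterion on the ancestral closure; the key observations --- that nothing in $D\cup\Dec(D)$ can lie in $\mathrm{An}(\{X^n,X^m\}\cup U)$ (acyclicity plus your exclusion of $\Dec(D)$ from $U$), and that every moral-graph neighbour of $X^n$ lies in $\MB(X^n)$ and hence in $U$ --- are both sound, and together they do give separation, since any moral path from $X^n$ must immediately hit $U$. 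What your approach buys is the elimination of the descendant/non-descendant case split and of path-by-path collider bookkeeping, which is exactly the error-prone part of the paper's argument (its proof is quite terse there, e.g.\ the claim that the first collider lies \emph{in} $U$ should really be ``has a descendant in $U$,'' which still suffices since every element of its $U$ is an ancestor of $X^m$). What the paper's construction buys is explicitness: exhibiting $\Pa_G(X^n)$ and specific children/co-parents inside the separating set is what supports the stronger appendix form of the proposition (one may always include parents and exclude sinks), which DAT-Graph exploits algorithmically in Section~\ref{sec: intervention with known targets} to fix coordinates of $\psi$; your $U$, being the maximal ``safe'' subset of the blanket, does not by itself yield that refinement.
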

Now, to test if $X^n$ is adjacent to $X^m$ we can choose to search for a separating set $U\subset \MB(X^n)\setminus\{X^m\}$ or for a separating set ${U\subset \MB(X^m)\setminus\{X^n\}}$.
Classical testing methods solve the separating set selection problem by testing every subset of the search space;
    thus these methods saved a large amount of compute by learning the moral graph and then only solving the separating set selection problem corresponding to the smaller Markov boundary \cite{Margaritis1999-vu, Mokhtarian2020-lh}.

DAT does not scale poorly with the size of the search space, so DAT-Graph instead performs both tests and concludes that $X^n$ adjacent to $X^m$ if either of the tests states they are adjacent.
Performing two tests increases the reliability of the adjacency test:
there may be some subset $U\subset \MB(X^n)\setminus\{X^m\}$ that violates faithfulness with \mbox{$X^n\notindep X^m|U$} but such that $U$ is not a subset of $\MB(X^m)\setminus\{X^n\}$.
\begin{example}\label{ex: dg pro mb}
   \textbf{(Performing two tests increases reliability)} (Proof in Appendix~\ref{app: counterexamples})
    There are four jointly Gaussian random variables that are not faithful such that DAT-Graph recovers the correct graph.
\end{example}
Performing two tests also follows the idea of other methods that perform multiple tests that are redundant in the faithful case to be reliable when there are violations of faithfulness \cite{Spirtes2014-yq, Marx2021-wy}. 

In practice, we take the statistics of the two tests from Section~\ref{sec: diff objective}, $\hat L^n_{\mathrm{DAT}}(\psi^*)$ and $\hat L^m_{\mathrm{DAT}}(\psi^*)$, and decide that $X^n$ is adjacent to $X^m$ if either test statistic is large: $\hat L^n_{\mathrm{DAT}}(\psi^*)^2 + \hat L^m_{\mathrm{DAT}}(\psi^*)^2 > \eta_1$.
\subsection{Computational cost of DAT-Graph}\label{sec: datgraph scaling}
DAT-Graph has two steps.
In its first step it learns the moral graph.
This in principle scales quadratically with $N$, but in practice can scale to $N=10^4$ in hours \cite{Nazaret2023-yy}.
In the second step, DAT-Graph performs two adjacency tests for every edge in the moral graph.
If $s$ is the average number of parents in the graph $G$ then the Markov blanket of a variable $X^n$ often has approximately $O(s^2)$ edges.
Thus DAT-Graph needs to perform $O(Ns^2)$ tests which each involve a search over $O(s^2)$ variables.
Thus in principle, this step of DAT-Graph scales linearly with $N$ and is much faster on sparser graphs. 
\vspace{-.3cm}

\section{Learning from data with interventions}\label{sec: interventions}
\vspace{-.1cm}
In the purely observational setting, the graph $G$ can only be determined up to an equivalence class --
for some variables, we cannot distinguish which is the cause and which is the effect.
To learn cause and effect for these variables, we can collect data in an experiment where we intervene on a variable \cite{Hauser2012-vq}.
For example, we may knock down a gene in a cell.
We can then distinguish between cause and effect as intervening on a cause should affect the distribution of an effect, but not vise-versa.

In this section we extend DAT-Graph to learn from observational as well as intervention data of large and complex systems.
To do so, we model an intervention on a variable $X^n$ as a change in the dependence on its parents $h_n$ as defined in Eqn.~\ref{eq: functional}.
We can represent this by adding an extra argument to $h_n$
\begin{equation}\label{eq: functional w interventions}
    X^n = h_n\left(I, \epsilon_n, (X^m)_{m\in\Pa(X^n)}\right),
\end{equation}
where $I$ is a binary variable representing the presence of an intervention on $X^n$.
In intervention data we may have a number of intervention targets $X^{m_1}, \dots, X^{m_K}$ with intervention indicators $I^1, \dots, I^K$.
We assume we know which variables are intervened upon in each experiment, so $I^1, \dots, I^K$ are observed.

Just as Eqn.~\ref{eq: functional} produces a causal graph $G$ over the variables $X^1, \dots, X^N$,
Eqn.~\ref{eq: functional w interventions} produces a causal graph over the extended set of variables $X^1, \dots, X^N, I^1, \dots, I^K$ that has $G$ as an induced sub-graph.
We can therefore learn from intervention data by applying our method to learn the graph over the extended set of variables.
This approach is known as joint causal inference \cite{Mooij2020-di}.

Including the interventions in inference can help in orienting edges of the graph \cite{Mooij2020-di}.
We however note that if the targets of the intervention are known, then intervention data can also help learn the skeleton, reduce the number of tests we must perform, and can reduce the search space of our tests.
In Section~\ref{sec: intervention with known targets} we describe our method to learn from intervention data with known targets and show it help learn graphs more accurately in theory and in practice.

\vspace{-.1cm}
\section{Experiments} \label{sec: experiments}
\vspace{-.1cm}
Here we demonstrate that DAT-Graph can accurately and scalably learn graphs from data of large complex systems, with or without interventions, and on real and synthetic data.
We show that DAT-Graph performs particularly well on sparser graphs.
We also show that we can also combine the strengths of DAT-Graph and gradient-based model search methods in a hybrid method.

We measure the accuracy of inferred graphs with the Structural Hamming Distance (SHD) between the inferred skeleton and the true skeleton, and the SHD of the inferred directed graph and the true directed graph.
The SHD of the skeletons is the number of incorrectly inferred edges.
The SHD of the directed graphs is the SHD of the skeletons plus the number of incorrectly directed edges.

To implement DAT, we need to pick the distribution $f_m$ of the noise variables $(N_m)_m$ from Section~\ref{sec: discrete to diff search}.
To sample $N_m$ we first take $\tilde N_M\sim \mathrm{Laplace}$;
if $|\tilde N_M|\leq 1$ then we set $N_m = \frac 1 2 \tilde N_m$, otherwise, we scale $\tilde N_m$ to get thicker tails: $N_m = \frac 1 2 \mathrm{sgn}(\tilde N_m)|\tilde N_m|^{1.1}$.
In Appendix~\ref{sec: assump in exp} we show that this choice satisfies the assumptions of Thm.~\ref{Thm: main reliability}.
We perform all experiments on a single CPU and a single RTX 8000 GPU.
Other details of DAT are described in Appendix~\ref{app: method details}.
Experimental details are described in Appendix~\ref{app: exp details}.
\vspace{-.6cm}

\subsection{Learning from observational data}\label{sec: obs experiments}
\vspace{-.1cm}
\paragraph{Setup} First we demonstrate that DAT-Graph can accurately learn a causal graph from data of large complex systems in the purely observational setting.
To compare methods as fairly as possible, we simulate data of large complex systems as reported in the state of the art gradient-based model search method, SDCD, from \citet{Nazaret2023-yy}.
To do so, we generate an Erdős-Renyi random directed graph $G$ over $N$ variables with $s$ average parents.
We simulate complex relations between variables by using randomly initialized two layer neural networks with additive Gaussian noise as the causal relations between variables in Eqn.~\ref{eq: functional}.
We then generate $10000$ datapoints from this model.

\citet{Nazaret2023-yy} performed a thorough investigation of the scalability of different graph learning algorithms.
They showed that, on this data, existing algorithms -- other than their algorithm, SDCD -- do not scale to more than $100$ variables or only achieve trivial accuracy.
We therefore use SDCD as our baseline.
SDCD is a gradient-based model search method with a similar first step to DAT-Graph -- they both first learn the moral graph using a variable selection procedure.
Then, when SDCD is learning the causal graph, it masks edges that are not in the learned moral graph.
\vspace{-.3cm}

\paragraph{Large graphs} 
In Fig.~\ref{fig:observation} we plot the SHD in the inferred skeleton and inferred directed graph on data with $s=4$ and various values of $N$ as in \citet{Nazaret2023-yy}.
We note DAT-Graph infers the graph just as accurately as the state of the art method SDCD at all $N$.
We also note that DAT-Graph becomes relatively more accurate as $N$ increases, in this case beating the state of the art model SDCD by a substantial margin for large $N$.
In Fig.~\ref{fig:observation sf} and \ref{fig:observation lin} in the Appendix we also show a similar result when the graph is generated from a scale-free distribution or with linear relations.
In App.\ref{sec: small N} we show that DAT-Graph achieves state of the art accuracy among small-scale methods as well.
\vspace{-.2cm}

\begin{figure}
    \centering
    \includegraphics[trim={0cm 0.cm 0.0cm 0.cm}, width=0.99\columnwidth]{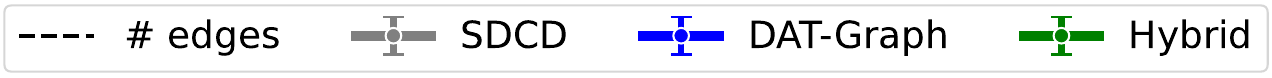}
    \subfigure[][\label{fig: obs skel} Skeleton]{\includegraphics[trim={1.1cm 0.6cm 0.0cm 0.6cm}, width=0.44\columnwidth]{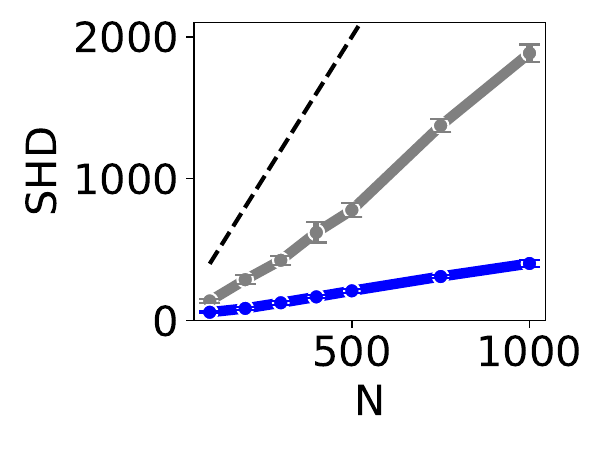}}
    \subfigure[][\label{fig: obs graph} Directed graph\hspace{-0.3cm}]{\includegraphics[trim={0.5cm 0.6cm 0.6cm 0.6cm}, width=0.44\columnwidth]{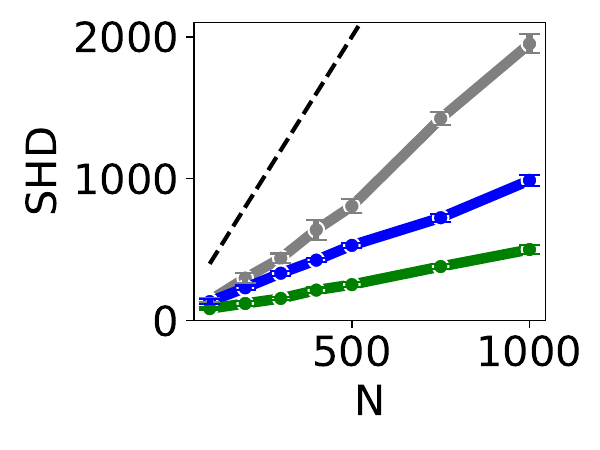}}
    \vspace{-.3cm}
    \caption{\textbf{DAT-Graph learns large graphs accurately.} 
    We plot the mean error (SHD) and standard error against the size of the graph ($N$) across 3 replicates.} \label{fig:observation}
    \vspace{-.5cm}
\end{figure}

\paragraph{Hybrid model}
Next we demonstrate that DAT-Graph and model search methods have complimentary strengths that can be combined in a hybrid model.
The SHD of SDCD's inferred skeleton is similar to the SHD of its directed graph, meaning it often picks the correct direction for arrows in the graph.
This is not the case for DAT-Graph, which infers a very accurate skeleton but makes more errors when orienting edges.
To combine the advantages of these methods, we create a hybrid method by first learning a skeleton using DAT-Graph and then learning the directions of the edges using SDCD.
Fig.~\ref{fig: obs graph} shows that this hybrid method performs substantially better than both methods at scale.
\vspace{-.2cm}

\paragraph{Scaling}
Our results also demonstrate that DAT-Graph can theoretically scale to learn from very large datasets.
SDCD is incredibly scalable -- it infers a graph of 1000 nodes in roughly 25 minutes.
DAT-Graph is not as scalable as SDCD however it scales to large systems in reasonable time -- it infers a graph of 1000 nodes in roughly 4 and a half hours.
As well, in Fig.~\ref{fig: time n} we show that compute time for DAT-Graph scales roughly linearly with $N$;
this is expected as, keeping $s$ fixed, the number of adjacency tests in DAT-Graph scales linearly with $N$ as discussed in Section~\ref{sec: skel graph}.
If this linear trend continuous, DAT-Graph could learn from the largest transcriptomics datasets, which could include a variable for all 20000 genes in a human, in a few days.

\vspace{-.2cm}
\paragraph{Sparsity}
Graphs of real data are likely to be sparse -- each variable is caused by few others -- and we would like to take advantage of this sparsity to learn a more accurate graph.
In principle, both SDCD and DAT-Graph take advantage of sparsity to exclude more edges when learning the moral graph.
In Fig.~\ref{fig:sparsity} we test how well each method takes advantage of sparsity in practice by plotting the error in the inferred graphs for datasets with $N=200$ and various average numbers of parents $s$.
All methods perform more accurate inference on sparser graphs, but DAT-Graph benefits from sparsity much more than SDCD -- when $s=2$ the mean SHD of the graph inferred by SDCD is 112, while that of DAT-Graph and the hybrid method are 45 and 29 respectively.
In Fig.~\ref{fig: time s} in the Appendix we also show that DAT-Graph also requires less compute to learn sparser graphs.
On the other hand, SDCD makes more accurate predictions on denser graphs, a possible advantage gradient-based learning methods.
In Fig.~\ref{fig:observation s6} in the Appendix we confirm that the conclusion of Fig.~\ref{fig: obs graph} do not change when the graph is dense -- DAT-Graph and the hybrid method make more accurate predictions as the graph gets larger.
\vspace{-.2cm}

\begin{figure}
    \centering
    \subfigure[][\label{fig: sparse skel} Skeleton]{\includegraphics[trim={1.1cm 0.6cm 0.0cm 0.6cm}, width=0.4\columnwidth]{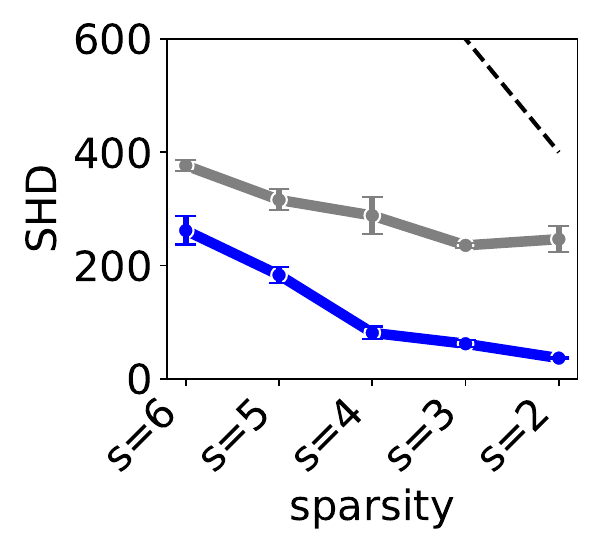}}
    \subfigure[][\label{fig: sparse graph} Directed graph\hspace{-0.3cm}]{\includegraphics[trim={0.5cm 0.6cm 0.6cm 0.6cm}, width=0.4\columnwidth]{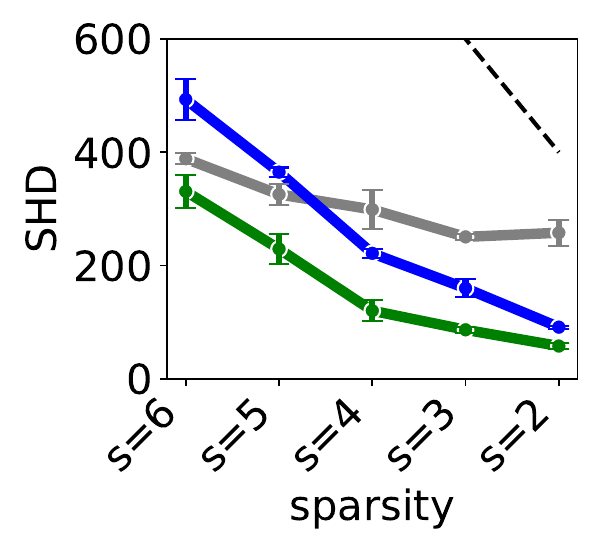}}
    \vspace{-.3cm}
    \caption{\textbf{DAT-Graph learns sparser graphs more accurately.}
    We plot the mean and standard error against the sparsity of the graph across 3 replicates.
    The legend is the same as that of Fig.~\ref{fig:observation}.
    \vspace{-0.5cm}} \label{fig:sparsity}
\end{figure}

\subsection{Learning from intervention data}
Given more intervention data, we expect to be able to learn a more accurate graph.
To see if DAT-Graph efficiently uses intervention data to learn more accurate graphs, we simulate data similar to the setup above but intervene on certain variables.
Intervened variables are drawn from a Gaussian distribution with standard deviation $0.1$.
We vary the fraction of variables that are intervened upon.
We simulate $10000$ datapoints from the observational distribution and for every intervened variable we sample another $500$ datapoints where that variable is intervened upon.
\citet{Nazaret2023-yy} demonstrated that SDCD learns from intervention data substantially better than other methods.
Thus we use SDCD as our baseline.
\vspace{-.1cm}

In Fig.~\ref{fig:intervention} we plot the SHD of the skeleton and directed graph when $N=100$, $s=4$ for datasets with various fractions of variables intervened.
We see that all methods efficiently use intervention data -- an increasing amount of intervention data makes predictions more accurate.
\vspace{-.2cm}

\begin{figure}
    \centering
    \includegraphics[trim={0cm 0.cm 0.0cm 0.cm}, width=0.78\columnwidth]{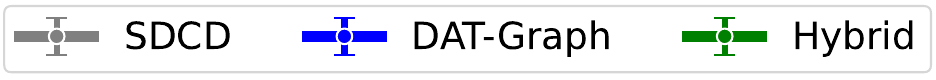}
    \subfigure[][\label{fig: int skel} Skeleton]{\includegraphics[trim={1.2cm 0.6cm 0.0cm 0.6cm}, width=0.4\columnwidth]{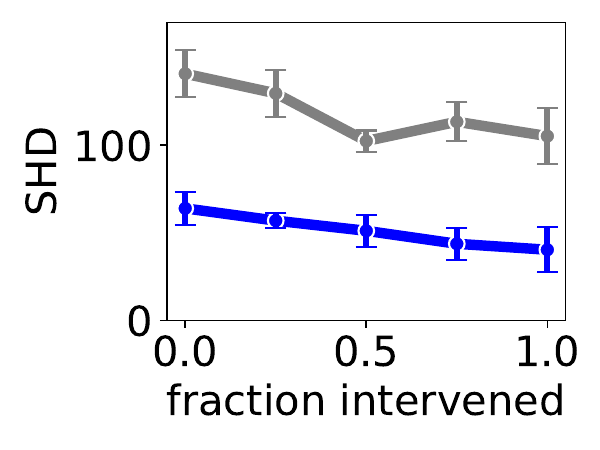}}
    \subfigure[][\label{fig: int graph} Directed graph\hspace{-0.3cm}]{\includegraphics[trim={0.6cm 0.6cm 0.6cm 0.6cm}, width=0.4\columnwidth]{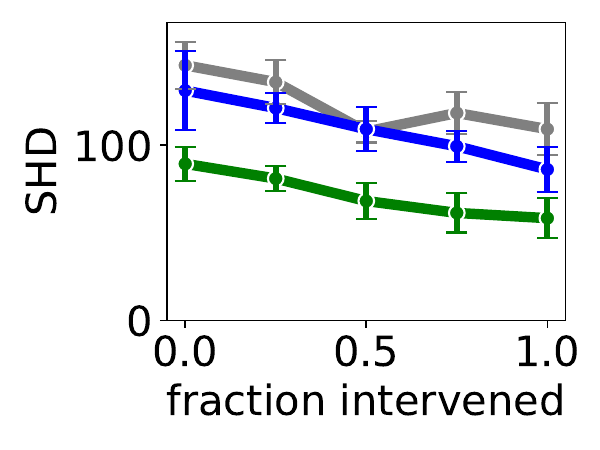}}
    \vspace{-.3cm}
    \caption{
    \textbf{DAT-Graph learns more accurate graphs when given intervention data.} 
    We plot the mean and standard error against the number of variables with interventions across 5 replicates.
    \vspace{-.5cm}} \label{fig:intervention}
\end{figure}

\subsection{Sensitivity to model choices}
\vspace{-.1cm}
In Section~\ref{sec: discrete to diff search} and \ref{sec: skel graph} we justified our choices of representation $r_\psi$ and testing each edge twice in theory.
In Section~\ref{sec: ablations} we perform ablations that show that these choices also substantially increase the accuracy of DAT-Graph in practice.

In Fig.~\ref{fig:robust nn} and \ref{fig:robust eta} in the appendix we show that DAT-Graph is robust to neural network and threshold hyperparameter choices.
In Fig.~\ref{fig:robust c} we investigate the effect of adding more statistics to our estimate of variance explained in Section~\ref{sec: diff objective}; we see including the first two moments does better than including only the first moment or the first three, likely because two moments best balances the ability for us to detect dependencies with the variance of the estimator.

\vspace{-.2cm}
\subsection{Predicting interventions on RNA sequencing data}
\vspace{-.1cm}
Learning which variables are causes of which others in principle allows us to better predict the effects of interventions.
In this section we investigate whether the graph learned by DAT-Graph can be useful for predicting interventions on large complex systems. 
Here we learn from a single-cell RNA sequencing experiment of cancer that is resistant to immunotherapy \cite{Frangieh2021-nd} to predict the effects of gene knockdowns.
Good prediction can tell us about the mechanisms of resistance and suggest targets for treatment.

Each variable $X^n$ is the normalized transcript count of gene $n$ and each data point is the transcript counts for every gene in a cell.
Interventions are CRISPR gene knockdowns; there can be multiple interventions per cell.
We preprocessed this data as in \citet{Lopez2022-iz}.
We first split the data into the three cell populations studied in \citet{Frangieh2021-nd} --- control, co-culture, and IFN-$\gamma$-treated cells.
We then filtered to predict on the $N=1000$ most variable genes.
We split each dataset into a training set and a test set containing interventions that are not in the training set.

We infer a graph from each training set using DAT-Graph and evaluate whether restricting the graph search of SDCD with these graphs can improve prediction.
In Fig.~\ref{fig: rna performance} we show that 
SDCD's prediction improves substantially when graph search is restricted to edges learned in the skeleton of DAT-Graph. 
In Appendix~\ref{sec: rna performance second hybrid} we show that this improvement is not an artefact of training SDCD.
The hybrid model also outperforms another gradient-based model search method, DCDFG, which was built to learn on large-scale RNA sequencing data \cite{Lopez2022-iz}.

\begin{figure}
    \centering
    \includegraphics[trim={0cm 0.0cm 0.0cm 0.cm}, width=0.9\columnwidth]{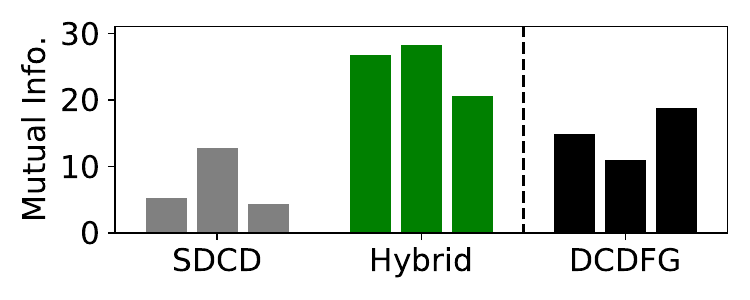}
    \vspace{-.5cm}
    \caption{
    \textbf{DAT-Graph helps predict the effects of unseen interventions on RNA sequencing data.} 
    We plot the learned mutual information --- the difference between the mean log likelihood of a model and the mean log likelihood of a trivial model with an empty graph on the test set --- of models for the datasets ``control'', ``IFN-$\gamma$'', and ``co-culture''. 
    We compare two graph search algorithms --- SDCD and DCDFG --- to our method which is a hybrid of DAT-Graph and SDCD.
    \vspace{-.6cm}} \label{fig: rna performance}
\end{figure}

\section{Conclusion} \label{sec: conclusion}
We have developed DAT-Graph to scalably and reliably learn cause and effect from data of large and complex systems.

There are many exciting directions for future work.
While DAT-Graph accurately learns sparse graphs, it can be less accurate than model selection based methods when learning dense graphs;
    future work could build hybrid models that have the strengths of both approaches.
DAT-Graph also follows a trend of modern multi-step graph learning methods that come with more hyperparameters \cite{Nazaret2023-yy, Lopez2022-iz};
    future work could reduce the number of hyperparameters by combining steps in these methods, for example by reusing neural networks between steps.

In this work we have made the assumption that all variables are observed and there are no cycles in $G$.
However, there are large and complex systems where these assumptions are not to likely to hold \cite{Sethuraman2023-ze, Lorch2023-qe}.
Methods have been developed to test for confounding and cycles by solving the separating set selection problem, but only at small scale \cite{Spirtes2001-sj, Richardson1996-yb}.
Future work could scale these methods using DAT.

In this work we have considered learning the entire graph $G$.
However, often one is only interested in learning cause and effect for a particular node.
Unfortunately model search methods must learn the whole graph.
Future work may apply DAT to only learn the skeleton around this node.
In addition to saving compute, learning the skeleton without necessarily orienting edges is a procedure that is robust to confounding or cycles \cite{M_Mooij2020-um}.

\section*{Impact Statement}
DAT-Graph allows accurate inference of cause and effect in systems of many variables that interact in complex ways.
Such systems appear in genetics, microbiology, and health.
Understanding cause and effects in these settings can help us understand the mechanisms of disease, and help us build better interventions and treatments.
On the other hand, causal conclusions about genetics and phenotype can be used to justify harmful policies.

\bibliography{bibliography}
\bibliographystyle{icml2024}

\newpage
\appendix
\onecolumn

\section{Review of graph learning methods}\label{app: related work}

We attempt to learn graphs with no unobserved latent variables or cycles.
There is an extensive history of algorithms to learn such graphs from data~\citep{Vowels2022-gr}. 
Methods can learn a graph by testing for conditional independence \citep{Spirtes1993-ty, Spirtes2001-sj, Margaritis1999-vu}, optimizing an objective \citep{Chickering2002-rx, Van_de_Geer2013-ak, Raskutti2018-fo}, performing independent component analysis \citep{Shimizu2006-hj, Shimizu2011-oq, Reizinger2022-ep}, looking for nonlinearities \citep{Rolland2022-ea, Montagna2023-wm, Montagna2023-wv}, and more \citep{Immer2023-nx, Gao2020-av, Reisach2021-sd}.
Unfortunately, \citet{Nazaret2023-yy} showed that almost all methods made strong assumptions on the form of the cause-effect relationships between variables or can not scale to more than 100 variables.
Our work aims to perform flexible inference at large scale.

Recently a class of optimization-based graph learning methods were able to scale to learn from data of large complex systems by searching through the space of graphs at the same time as training flexible neural networks to model causal relationships \citep{Zheng2018-dc, Lachapelle2019-jl, Zheng2020-wl, Bello2022-xz, Nazaret2023-yy}.
These methods can also learn from data with interventions \citep{Brouillard2020-wo, Nazaret2023-yy}.
Unfortunately, the model search can involve an unstable optimization problem over the enormous space of all graphs and has been shown to be unreliable in some settings \citep{Wei2020-ju, Nazaret2023-yy, Deng2023-ix}.
\citet{Nazaret2023-yy} recently substantially improved the accuracy of these methods by first learning the moral graph to shrink the model search space.
”Hybridizing” model search with some conditional independence testing is also the strategy of the most accurate classical graph learning methods \citep{Tsamardinos2006-zk, Buhlmann2014-do}.
Our work aims to further shrink the model search space, or learn the graph entirely, by testing for conditional independence at scale.

\section{Details of the method}\label{app: method details}

\subsection{Orienting the edges of the skeleton}\label{sec: full graph}
Once we have learned the skeleton, we finally need to decide the direction of its arrows.
The equivalence class of a graph is determined by its skeleton and v-structures -- variables $X^n, X^k, X^m$ such that $X^n$ and $X^m$ are not adjacent and $X^n\rightarrow X^k\leftarrow X^m$ in $G$ \cite{Verma2022-cs}.
For all v-structures $X^n, X^k, X^m$ we have that $X^n$ and $X^m$ are spouses, so they are adjacent in the moral graph but not the skeleton.
As well, if $X^n$ and $X^m$ are spouses and $X^n - X^k - X^m$ in the skeleton, then if $p$ is faithful to $G$, $X^k\not \in \sepset(X^n, X^m)$ if and only if $X^n\rightarrow X^k\leftarrow X^m$ in $G$.

To infer v-structures, we first pick a threshold $\eta_2>0$ and look for any triplet in the inferred skeleton $X^n - X^k - X^m$ such that $X^n$ and $X^m$ are adjacent in the inferred moral graph but not the inferred skeleton.
We then decide if $X^k\in\sepset(X^n, X^m)$ using the two learned parameters $\psi^n$ and $\psi^m$ from the two tests between $X^n$ and $X^m$.
We label the triplet a v-structure if $(\psi_k^n)^2 + (\psi_k^m)^2<\eta_2$, that is if $k$ is not in the separating set for either of the tests we did for the pair $X^n, X^m$, otherwise we label it not a v-structure.
After we have labelled all of the v-structures, we can apply Meek's rules to orient many of the remaining edges as in the PC algorithm \cite{Spirtes1993-ty}.
We then use some heuristics described in Appendix~\ref{app: equiv to graph} to extract a single graph from the equivalence class.

\subsection{Learning the graph with interventions with known targets}\label{sec: intervention with known targets}
For clarity, assume $p$ is faithful in this section.
We first learn the moral graph over all variables $X^1, \dots, X^N, I^1, \dots, I^K$ with a minor modification.
Predicting $\MB(I^k)$ is unreliable in practice so we do not use it to build the moral graph.
Instead we just connect $X^n$ and $I^k$ in the moral graph if $I^k$ is predicted to be in $\MB(X^n)$ -- we pick a threshold $\eta_4>0$ and connect the two variables if $\alpha_{X^n, I^k}>\eta_4$.

Next we learn the skeleton over the variables $X^1, \dots, X^N, I^1, \dots, I^K$.
We reduce the number of tests we perform with three techniques:

1) We do not test adjacencies of intervention variables $I^k$.

2) We learn the parents of intervened variables during the moral graph learning step.
If $X^n$ is adjacent to an intervention $I^k$ with target $X^m\neq X^n$ in the moral graph then, since $X^n$ cannot be the parent or child of $I^k$ it must be its spouse -- $X^n\in\Pa_G(X^m)$.
Thus if $X^n$ is adjacent to $I^k$ in the moral graph then we label $X^n$ a parent of $X^m$.
We do not need to test the adjacency of $X^n$ and $X^m$.
\footnote{In principle we could orient every edge connected to the target of an intervention.
Say we determine that $X^n$ is adjacent to $X^m$ in $G$ where $X^m$ is the target of an intervention $I^k$.
If $I^k$ is not adjacent to $X^n$ in the moral graph then $X^n$ must be the child of $X^m$.
In practice however, we have seen that this strategy is unreliable in the absence of a large amount of intervention data.}

3) We use the direction of edges learned in the moral graph step to shrink the search space of the adjacency test.
The second statement in Prop.~\ref{app: proof reduce ad to moral} states we can always include parents of $X^n$ and we can always exclude sinks -- variables with no children -- in the separating set.
Thus, for $X^k\in\MB(X^n)$, if $X^k$ is a parent of $X^n$ we fix $\psi^n_k=1$;
and if we have determined that every node that $X^k$ is adjacent to in the moral graph is its parent then we fix $\psi^n_k=0$.

Once we have learned the skeleton of the variables $X^1, \dots, X^N$ and oriented some edges, we orient the remaining edges in $X^1, \dots, X^N$ just as in Section~\ref{sec: full graph}.

In Table~\ref{tab: jci ablation performance} we see that using our method above allows DAT-Graph to learn skeletons more accurately in practice than by simply including the intervention variables as nodes in the graph (Naive JCI).

\begin{table}[H]
\caption{\textbf{DAT-Graph accurately learns skeletons using intervention data.} 
    Accuracy of predicting adjacencies in the experiment in Fig.~\ref{fig:intervention} with $N=200$ and 50\% of variables intervened on.
	\label{tab: jci ablation performance}} 
  \setlength\tabcolsep{3pt}
  \begin{center}
\begin{small}
\begin{sc}
\begin{tabular}{ c c c c } 
	\toprule
	{Model} & {Errors (skeleton SHD)}\\ \midrule
    DAT-Graph & \textbf{51±9}\\ \midrule
	Naive JCI & 69±9 \\
\end{tabular}
\end{sc}
\end{small}
\end{center}
\end{table}

\subsection{Getting a graph from an equivalence class}\label{app: equiv to graph}
After we are done applying Meek's rules, there may be a small number of non-oriented edges -- we may have only identified the graph up to an equivalence class.
To get a single graph, we iteratively randomly orient a randomly chosen unoriented edge and re-apply Meek's rules until all edges are oriented.

There may also be cycles in the graph we learn, $\hat G$, due to disagreeing tests.
There are sophisticated methods to learn a graph in the case that tests disagree \cite{Triantafillou2015-gu}.
However, in our experiments there are usually only a small number of cycles in $\hat G$, so we take a simple approach to remove cycles.
First we calculate the matrix $(\hat G+I)^N-I$.
The $n$-th entry on the diagonal of this matrix is $0$ if and only if the $n$-th node is not in a cycle \cite{Zheng2018-dc}.
While there are non-zero entries on the diagonal, we pick the node $n$ that maximizes the value $((\hat G+I)^N-I)_{n, n}$ and remove an edge connected to this node that maximally reduces $\mathrm{trace}((\hat G-I)^N-I)$.

\subsection{Hyperparameters}
We have four threshold hyperparameters: one for deciding the edges of the moral graph $\eta_3$, one for deciding edges in the skeleton $\eta_1$, one for deciding v-structures $\eta_2$, and one for deciding edges in the moral graph connected to intervention variables $\eta_4$.
We choose $\eta_3=8\times 10^{-3}$, $\eta_1=10^{-4}$, $\eta_2=0.2$, $\eta_4=10^{-3}$.
Theorem~\ref{Thm: main reliability} suggests that $\eta_2$ should be a larger number.
We noticed however that a smaller value of $\eta_2$ resulted in more accurate graph recovery.
We discuss why this might be in Appendix.~\ref{sec: thm disscussion}.

Each variable selection problem in inferring the moral graph has a sparsity parameter $\lambda_n$.
We noticed when that $\sigma^2_n=E[X^n-g_{\theta_n}(X^m)_{m\neq n}]^2$ could vary drastically from node to node.
This caused the influence of the sparsity penalty to vary from node to node, making it challenging to get accurate graph recovery with a single threshold $\eta_3$.
To address this issue, we found the scaling the sparsity parameter $\lambda_n$ by $\sigma^2_n$ improved recovery of the moral graph.
Thus for the $n$-th variable we use a sparsity penalty of $0.01\times \sigma^2_n$ where $\sigma^2_n$ is estimated from the current minibatch.

We used batch sizes of size 256 in all cases.
To train the neural networks to predict the moral graph, we used the Adam optimizer with parameters $\beta_1, \beta_2=0.9, 0.999$ and learning rate $10^{-4}$ and trained for 30000 minibatches.
We train the models for all nodes in parallel on a GPU.

To train the networks to predict the skeleton we trained for 10000 minibatches and took alternating steps to update $\{\theta_1, \theta_2\}$ and $\psi$.
For $\{\theta_1, \theta_2\}$ we used the Adam optimizer with parameters $\beta_1, \beta_2=0.9, 0.999$ and learning rate $3\times 10^{-4}$ while for $\psi$ we used $\beta_1, \beta_2=0.9, 0.9$ and a learning rate of $3\times 10^{-4}$.
We train models for all tests in parallel on a GPU.

To predict the moral graph, we used 3 layer neural networks with 200 hidden units.
We used a ReLU activation and included dropout and batchnorm between layers.
We used a dropout probability of $0.1$ between the first and second layer and a probability of $0.5$ between the second and third.
To predict the skeleton, we used 3 layer neural networks with 100 hidden units.
We again used a ReLU activation and included dropout and batchnorm between layers.

\subsection{Other details}
Before learning the graph we normalize all variables in the data to have mean $0$ and standard deviation $1$.

We parameterize $\psi_n\in[0, 1]$ as $\psi_n=\mathrm{sigmoid}(\gamma_n)$ for $\gamma_n\in (0, \infty)$. $\gamma_n$ is the parameter we optimize by gradient descent.

When testing the adjacency between variables $X^n, X^m$, we have the choice of using either $E\mathrm{VarExplained}(X^n;X^m|\MB(X^n))$ or $E\mathrm{VarExplained}(X^m;X^n|\MB(X^n))$ as our measure for conditional independence.
We use the later in experiments as we found it to make more accurate decisions.

We use combined samples from 5000 mini-batches to calculate $\hat L_{\mathrm{DAT}}(\psi)$.

\section{Experimental details}\label{app: exp details}

\subsection{Data simulation details}
We simulate observational data just as in \citet{Nazaret2023-yy}.
We use code from \url{https://github.com/azizilab/sdcd} under an MIT licence.
Briefly, we generate a random undirected graph $\tilde G$ and a random permutation of $\{1, \dots, N\}$, $\pi$.
Then we have an edge $n\rightarrow m$ in the graph $G$ if $\pi(n)>\pi(m)$ and $n$ and $m$ are connected in $\tilde G$.
Next we model the functions 
$$X^n \sim \tilde h_n\left(\frac{X^m-E[X^m]}{\mathrm{Std}(X^m)}\right)_{m\in\Pa_G(n)}+N(0, 1)$$
where $\tilde h_n$ is a randomly initialized two layer neural network with ReLU activations and 100 hidden units.
If $\Pa_G(X^n)=\emptyset$, we set $\tilde h_n=0$.
When simulating data with linear relations, we replace $\tilde h_n$ with a linear model with weights drawn from $N(0, 1)$.

In our experiments with interventions, we assume that only a single variable is intervened on in each data point.
If a variable $X^n$ is intervened on then $X^n\sim 0.1 * N(0, 1).$

The SHD between two undirected graphs $G_1, G_2$ is the sum of the number of edges in $G_1$ but not $G_2$ and the number of edges in $G_2$ but not $G_1$.
The SHD between two directed graphs is the sum of the SHD of their skeletons plus the number of edges pointed in the wrong direction.

We simulated Erdős-Renyi and scale free random graphs using code from \citet{Montagna2023-hw}.

\subsection{Ablation experiments}\label{sec: ablations}
We performed ablations that demonstrate the benefits of our modelling decisions.
We generated observational data as in Section~\ref{sec: experiments} with $N=200$ and $s=4$;
results are shown in Table~\ref{tab: ablation performance} with mean skeleton SHD (defined in Section~\ref{sec: experiments}) and standard deviation across 3 replicates.

To demonstrate that our choice of representation provides a more accurate answer to the separating set selection problem we performed ablations where we replaced $r_\psi$ with a 3 layer neural network $r_\psi:\mathbb R^M\to\mathbb R$ with 200 hidden units (\textbf{Neural net $r_\psi$}).
We used a ReLU activation and included dropout and batchnorm between layers.
We used a dropout probability of $0.1$ between the first and second layer and a probability of $0.5$ between the second and third.
We then optimize $\psi$, the parameters of the neural network.
We also performed ablations where we used noise distributions $(f_m)_m$ with thin tails -- $f_m$ were Gaussian densities -- rather than the thick tailed distribution described in section~\ref{sec: experiments} (\textbf{Gaussian $(f_m)_m$}).

To demonstrate that testing twice as discussed in Section~\ref{sec: skel graph} makes our method more reliable, we perform an ablation where we decide if $X$ is adjacent to $Y$ by randomly testing one of $X\indep Y\ |\ U$ for some $U\subset\ \MB(X)\setminus\{Y\}$ or $X\indep Y\ |\ U$ for some $U\subset\ \MB(Y)\setminus\{X\}$ (\textbf{Only one test}).
We halve $\eta_1$ for this ablation.

In addition to the results shown in Table~\ref{tab: ablation performance} we also demonstrate that solving the separating set selection problem is informative for learning the graph: we perform ablations where we do not learn the parameters $\psi$: we either test if two nodes are adjacent in the skeleton by testing their marginal independence ($\psi=0$) or their conditional independence ($\psi=1$).
We show the results in Table~\ref{tab: ablation performance}.

\begin{table}[H]
\caption{\textbf{Ablations justify the choices in DAT-Graph in practice.} 
    Accuracy of predicting adjacencies in a graph with $N=200$ and $800$ edges with standard deviations across three replicates.
    Details in Appendix~\ref{sec: ablations}.
	\label{tab: ablation performance}} 
  \setlength\tabcolsep{3pt}
  \begin{center}
\begin{small}
\begin{sc}
\begin{tabular}{ c c c c } 
	\toprule
	{Model} & {Errors (skeleton SHD)}\\ \midrule
    DAT-Graph & \textbf{81±9}\\ \midrule
	Neural net $r_\psi$ & 173±11 \\
    Gaussian $(f_m)_m$ & 98±12 \\\midrule
	Only one test & 177±16 \\ \midrule
 Test marginal $\psi=0$ & 125±19 \\
    Test conditional $\psi=1$ & 308±14 \\\bottomrule
\end{tabular}
\end{sc}
\end{small}
\end{center}
\end{table}

\subsection{Baseline methods}\label{sec: baseline methods}
We implemented SDCD using the code from \url{https://github.com/azizilab/sdcd}.
We used the hyperparameters described in \citet{Nazaret2023-yy}.
In \citet{Nazaret2023-yy} SDCD was trained on 2000 epochs on 10000 datapoints.
When we added intervention data or trained on real RNA sequencing data, we scaled the number of epochs with the dataset size proportionally.
We also scaled the parameter \texttt{gamma\_increment} which controls the increment of the acyclicity penalty per epoch.

We also compare to a number of less scalable and flexible methods at small scale in Section~\ref{sec: small N}.
    \textbf{PC \cite{Spirtes1993-ty}:} a classical method to learn the graph by looking for conditional independence relations; we implement this algorithm using a kernel test for conditional independence.
    We used code from \url{https://pywhy.org/dodiscover/} under an MIT licence with kernel threshold $0.05$.
    \textbf{GES \cite{Chickering2002-rx}:} a classical model selection procedure that performs the model search with greedy perturbations to the graph; it assumes all variables are jointly Gaussian.
    We implemented this using the code from \citet{Hauser2012-vq}.
    \textbf{CAM \cite{Buhlmann2014-do}:} a classical hybrid graph learning method that assumes that causal interactions are additive.
    We used code from \url{https://pywhy.org/dodiscover/} under an MIT licence.
    \textbf{NoGAM \cite{Montagna2023-wv}:} a recent method that learns conditional independence relations and infers a graph by assuming additive noise;
    it recently performed best in an array of small scale settings against other small scale graph learning methods \citep{Montagna2023-hw}.
    We used code from \url{https://pywhy.org/dodiscover/} under an MIT licence.

\subsection{RNA sequencing data}
We learned on single cell RNA sequencing data from a study of immunotherapy resistant cancer \citet{Frangieh2021-nd}.
These cells had various genes perturbed by CRISPR knockdowns.
We preprocessed this data as in \citet{Lopez2022-iz} using code from \url{https://github.com/Genentech/dcdfg} under an Apache-2.0 licence.
The preprocessed dataset included between 57523 and 87436 cells and measurements of $N=1000$ genes.
Data include observational and intervention samples.
We created a test set by selecting 20\% of intervention targets and holding out samples of those interventions.
The test set had between $6984$ and $11993$ samples.

With default settings, SDCD predicts the conditional variance of variables with a neural network.
We noticed that on this dataset, SDCD makes worse-than-trivial predictions with this setting (unless hybridized with the skeleton learned using DAT-Graph).
Thus we used the setting \texttt{model\_variance\_flavor='parameter'}.
We also used \texttt{finetune=True} when training to get a valid likelihood on the test set.

We implemented the \texttt{MLPGaussialModel} from DCDFG using code from \url{https://github.com/Genentech/dcdfg} with hyperparameters that were optimized according to \citet{Lopez2022-iz} on the data, that is, $m=20$, $\lambda =10^{-3}$, trace exponential penalty.

Both SDCD and DCDFG model the data as coming from a Gaussian additive model.
A trivial prediction for such a model is that all variables are generated from iid Gaussians.
The mean negative log likelihood of this trivial model on a test set can be calculated as 
$$\sum_{n=1}^N\frac 1 2 \log(2\pi \mathrm{Var}_{\mathrm{train}}(X^n)) + \frac 1 2\frac{E_{\mathrm{test}}\left[X^n-E_{\mathrm{train}}X^n\right]^2}{\mathrm{Var}_{\mathrm{train}}(X^n)}.$$

To build the hybrid method, we reasoned that the same hyperparameters that are optimal for graph recovery as measured by SHD may not be optimal for intervention prediction.
It may be the case for example that leaving out a causal arrow in the graph may harm prediction much more than including spuriously inferred edges.
Thus we used the same hyperparameters as above but picked $\eta_3\in\{0.001, 0.003, 0.005, 0.008\}$ based on what minimized the fit on the training data on the ``control'' dataset set.
We found $\eta_3=0.001$ lead to the best fit on the training data and used this value for experiments in the main text.

\section{Further experimental results}
\subsection{Appendix to main text figures}

\begin{figure}[H]
    \centering
    \subfigure[][\label{fig: obs sf skel} Skeleton]{\includegraphics[width=0.2\columnwidth]{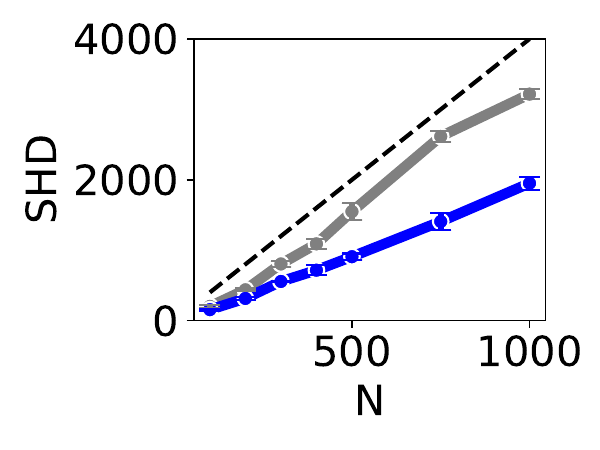}}
    \subfigure[][\label{fig: obs sf graph} Directed graph]{\includegraphics[width=0.2\columnwidth]{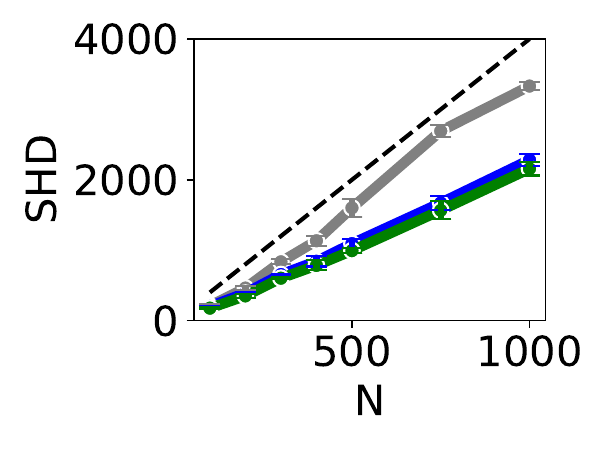}}
    \caption{\textbf{DAT-Graph can accurately learn from data of a scale-free random graph.} We perform the experiment in Fig.~\ref{fig:observation} with scale-free random graphs.
    We plot the mean SHD and standard deviation across 3 replicates.
    The legend is the same as that of Fig.~\ref{fig:observation}.} \label{fig:observation sf}
\end{figure}

\begin{figure}[H]
    \centering
    \subfigure[][\label{fig: obs lin skel} Skeleton]{\includegraphics[width=0.2\columnwidth]{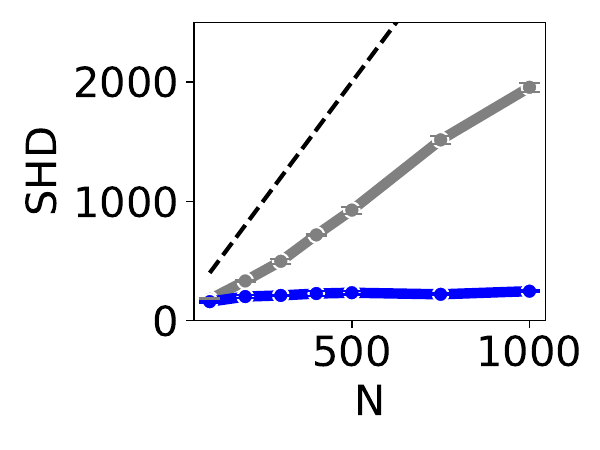}}
    \subfigure[][\label{fig: obs lin graph} Directed graph]{\includegraphics[width=0.2\columnwidth]{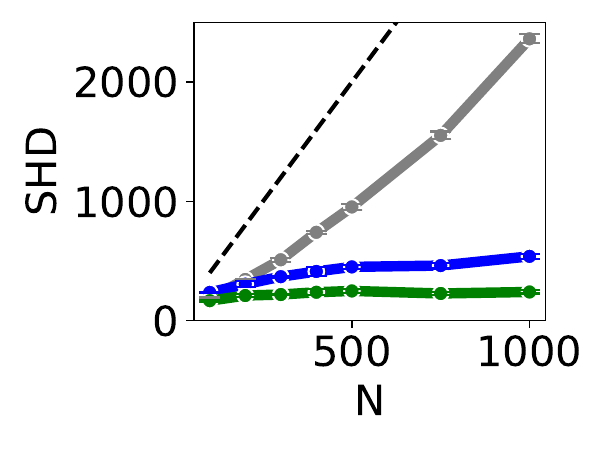}}
    \caption{\textbf{DAT-Graph can accurately learn from data with linear relations.} We perform the experiment in Fig.~\ref{fig:observation} with linear relations between variable.
    We plot the mean SHD and standard deviation across 3 replicates.
    The legend is the same as that of Fig.~\ref{fig:observation}.} \label{fig:observation lin}
\end{figure}

\begin{figure}[H]
    \centering
    \subfigure[][\label{fig: time n} Scaling with $N$]{\includegraphics[width=0.2\columnwidth]{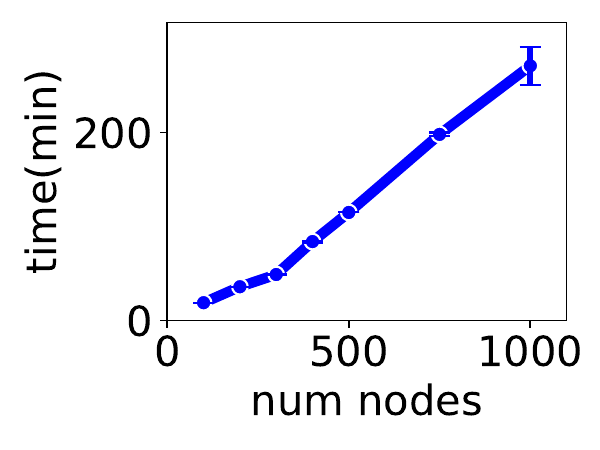}}
    \subfigure[][\label{fig: time s} Scaling with $s$]{\includegraphics[width=0.17\columnwidth]{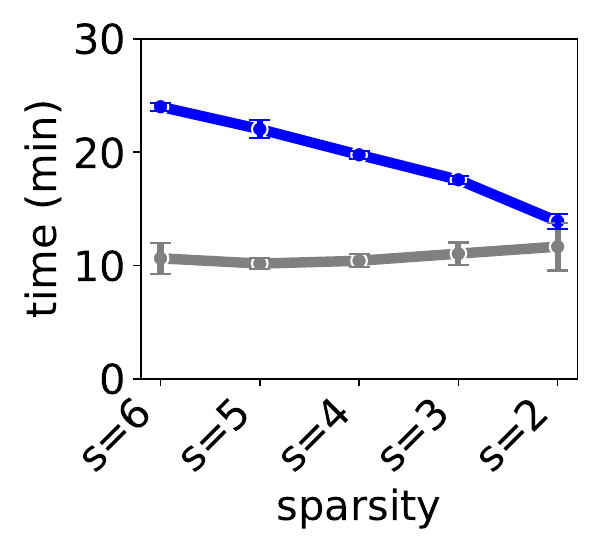}}
    \caption{\textbf{DAT-Graph scales linearly with $N$ and is more efficient on sparser graphs.}
    A) For $s=4$, we plot the wall time of DAT-Graph for various values of $N$.
    B) For $N=100$ we plot the wall time of DAT-Graph for various values of $s$. Error bars are standard deviations over 3 replicates.
    The legend is the same as that of Fig.~\ref{fig:observation}.} \label{fig:timing}
\end{figure}

\begin{figure}[H]
    \centering
    \subfigure[][\label{fig: obs s6 skel} Skeleton]{\includegraphics[width=0.2\columnwidth]{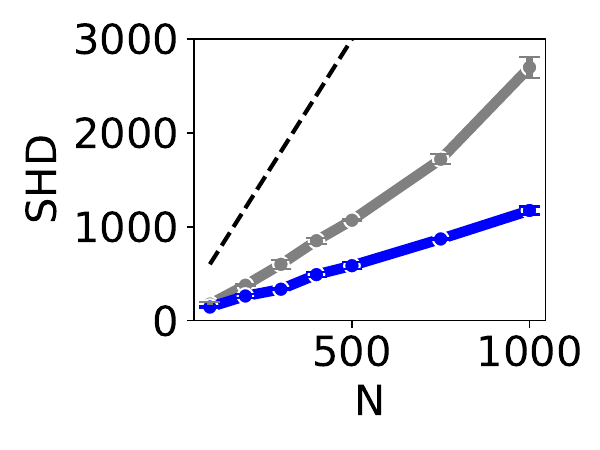}}
    \subfigure[][\label{fig: obs s6 graph} Directed graph]{\includegraphics[width=0.2\columnwidth]{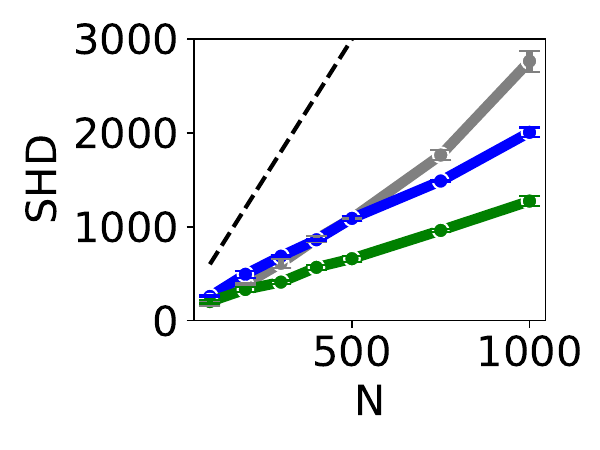}}
    \caption{\textbf{DAT-Graph and the hybrid method learn large graphs accurately even when they are dense.} We perform the experiment in Fig.~\ref{fig:observation} with $s=6$.
    We plot the mean SHD and standard deviation across 3 replicates.
    The legend is the same as that of Fig.~\ref{fig:observation}.} \label{fig:observation s6}
\end{figure}

\begin{figure}[H]
    \centering
    \subfigure[][\label{fig: obs lr1 skel} Skeleton]{\includegraphics[width=0.2\columnwidth]{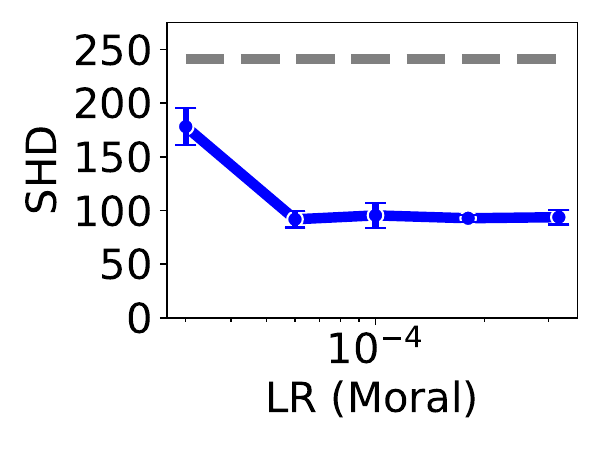}}
    \subfigure[][\label{fig: obs lr1 graph} Directed graph]{\includegraphics[width=0.2\columnwidth]{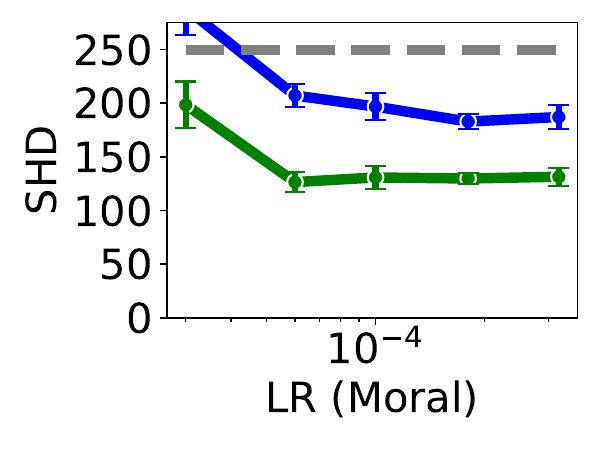}}
    \\
    \subfigure[][\label{fig: obs lr2 skel} Skeleton]{\includegraphics[width=0.2\columnwidth]{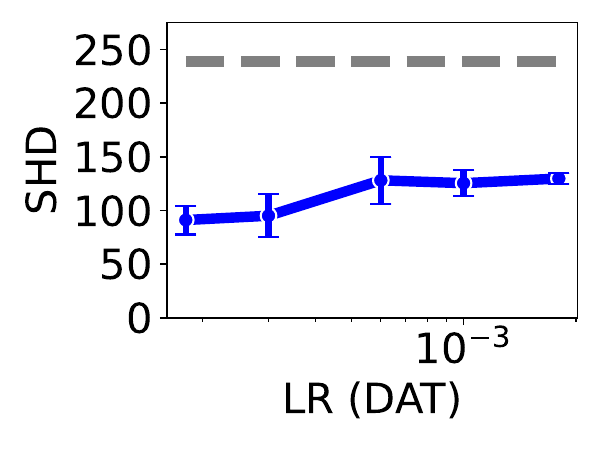}}
    \subfigure[][\label{fig: obs lr2 graph} Directed graph]{\includegraphics[width=0.2\columnwidth]{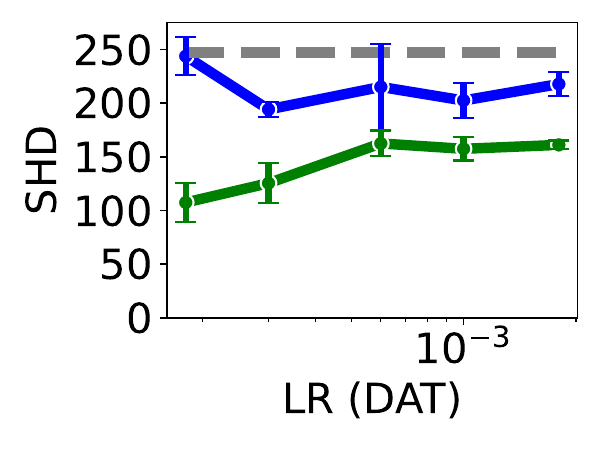}}
    \\
    \subfigure[][\label{fig: obs hw skel} Skeleton]{\includegraphics[width=0.2\columnwidth]{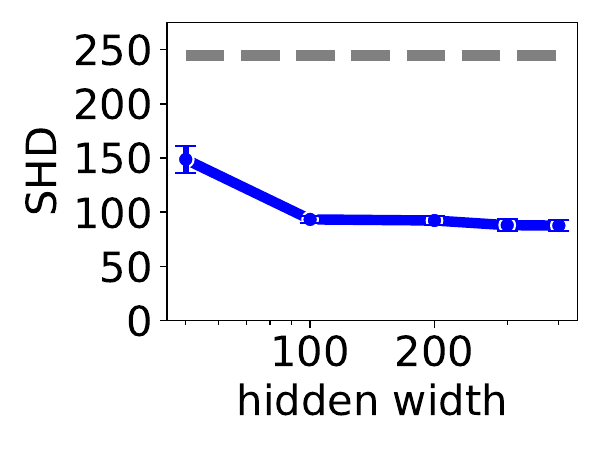}}
    \subfigure[][\label{fig: obs hw graph} Directed graph]{\includegraphics[width=0.2\columnwidth]{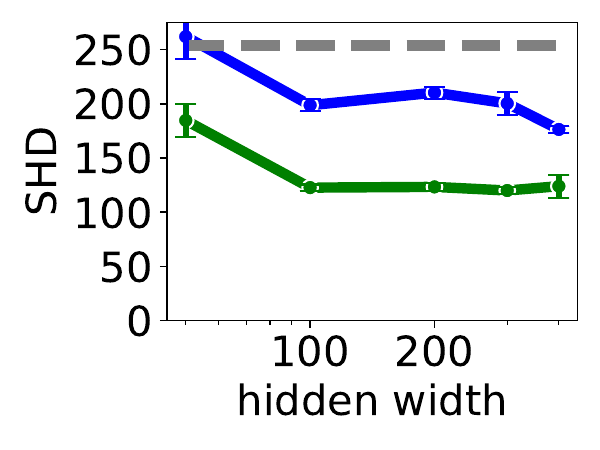}}
    \\
    \subfigure[][\label{fig: obs lm skel} Skeleton]{\includegraphics[width=0.2\columnwidth]{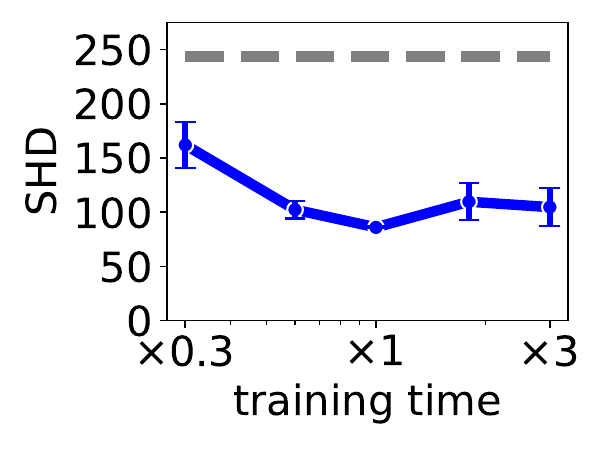}}
    \subfigure[][\label{fig: obs lm graph} Directed graph]{\includegraphics[width=0.2\columnwidth]{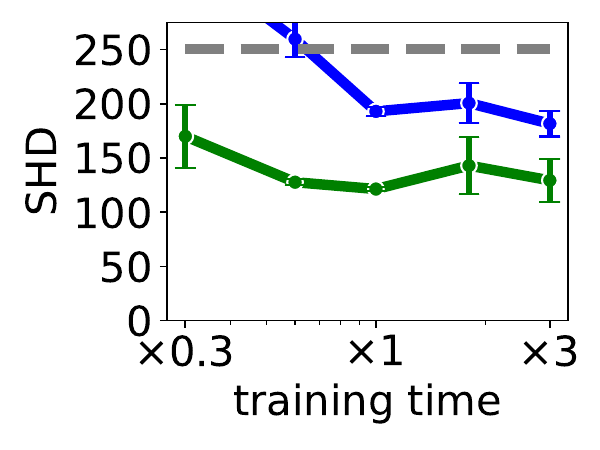}}
    \caption{\textbf{DAT-Graph is robust to the choice of neural network hyperparameters.} We perform the experiment in Fig.~\ref{fig:intervention} with $N=200$, intervening on $50\%$ of variables, varying the learning rate of the moral graph learning step, the DAT step, the hidden widths of the nerual networks, and the neural network training times over an order of magnitude.
    We plot the mean SHD and standard deviation across 3 replicates.
    The legend is the same as that of Fig.~\ref{fig:observation}.} \label{fig:robust nn}
\end{figure}

\begin{figure}[H]
    \centering
    \subfigure[][\label{fig: obs eta1 skel} Skeleton]{\includegraphics[width=0.2\columnwidth]{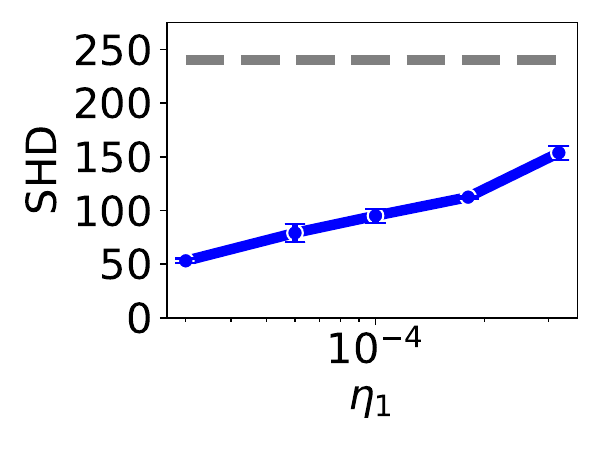}}
    \subfigure[][\label{fig: obs eta1 graph} Directed graph]{\includegraphics[width=0.2\columnwidth]{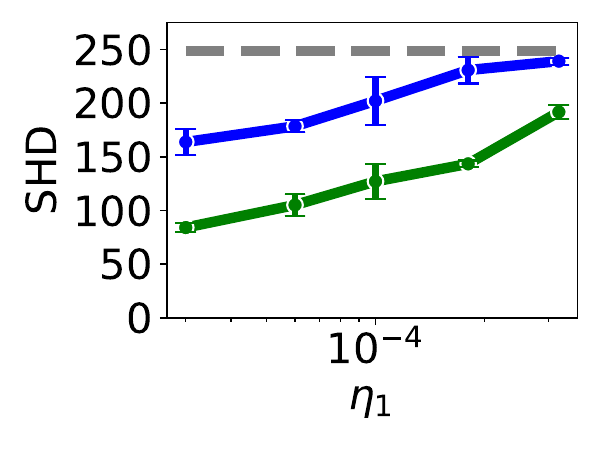}}
    \\
    \subfigure[][\label{fig: obs eta2 skel} Skeleton]{\includegraphics[width=0.2\columnwidth]{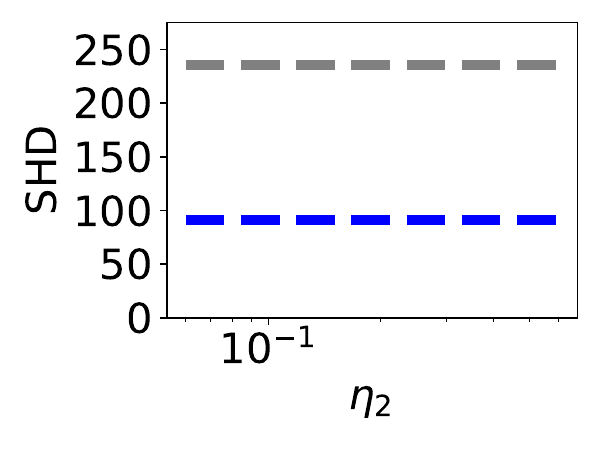}}
    \subfigure[][\label{fig: obs eta2 graph} Directed graph]{\includegraphics[width=0.2\columnwidth]{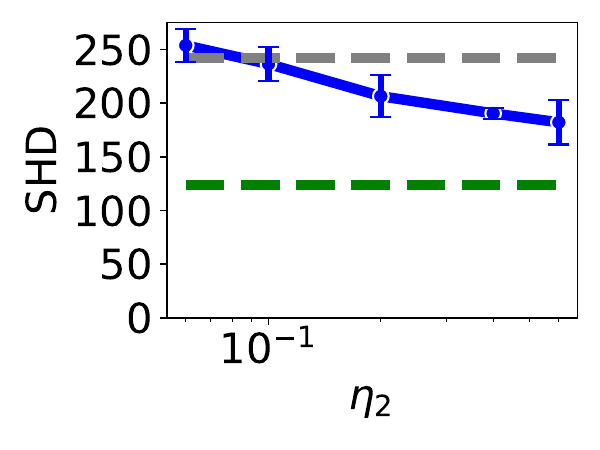}}
    \\
    \subfigure[][\label{fig: obs eta3 skel} Skeleton]{\includegraphics[width=0.2\columnwidth]{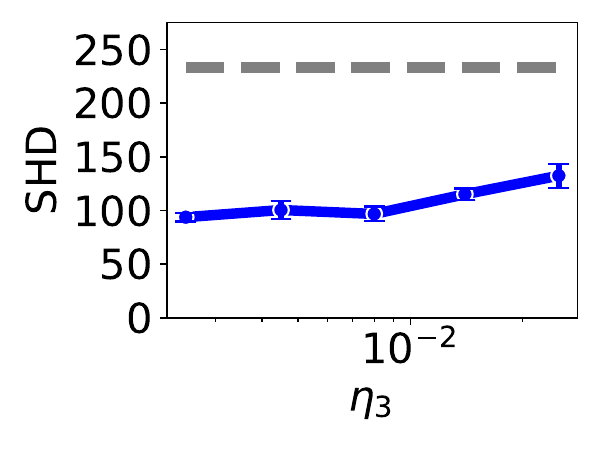}}
    \subfigure[][\label{fig: obs eta3 graph} Directed graph]{\includegraphics[width=0.2\columnwidth]{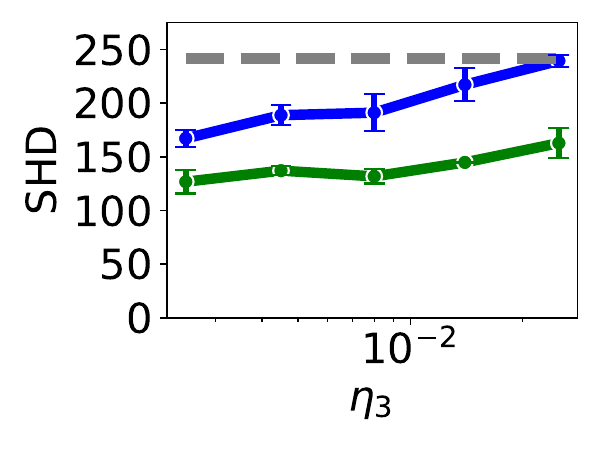}}
    \\
    \subfigure[][\label{fig: obs eta4 skel} Skeleton]{\includegraphics[width=0.2\columnwidth]{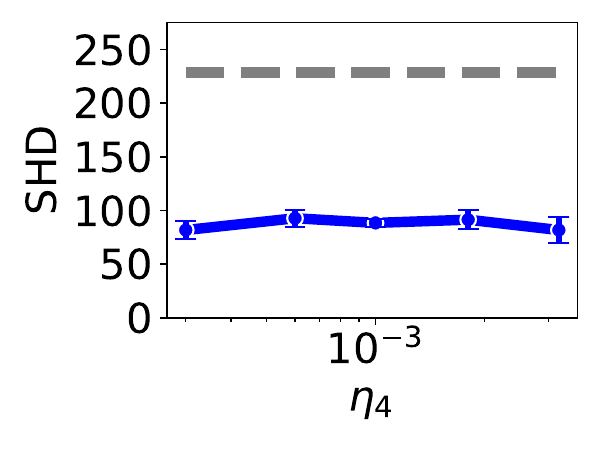}}
    \subfigure[][\label{fig: obs eta4 graph} Directed graph]{\includegraphics[width=0.2\columnwidth]{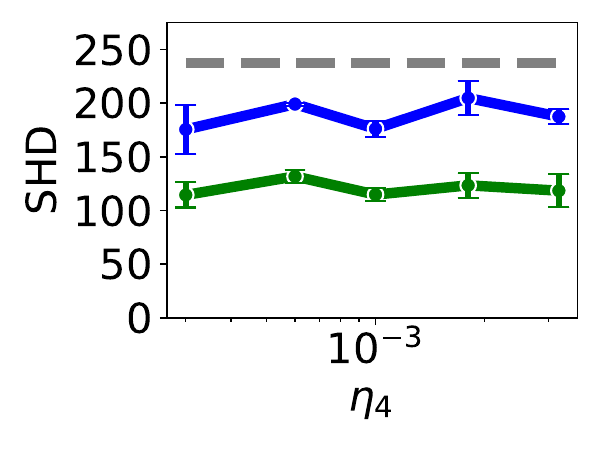}}
    \caption{\textbf{DAT-Graph is robust to the choice of threshold hyperparameters.} We perform the experiment in Fig.~\ref{fig:intervention} with $N=200$, intervening on $50\%$ of variables, varying the parameters $\eta_1, \eta_2, \eta_3, \eta_4$ over an order of magnitude.
    We plot the mean SHD and standard deviation across 3 replicates.
    The legend is the same as that of Fig.~\ref{fig:observation}.} \label{fig:robust eta}
\end{figure}

\begin{figure}[H]
    \centering
    \subfigure[][\label{fig: c skel} Skeleton]{\includegraphics[width=0.2\columnwidth]{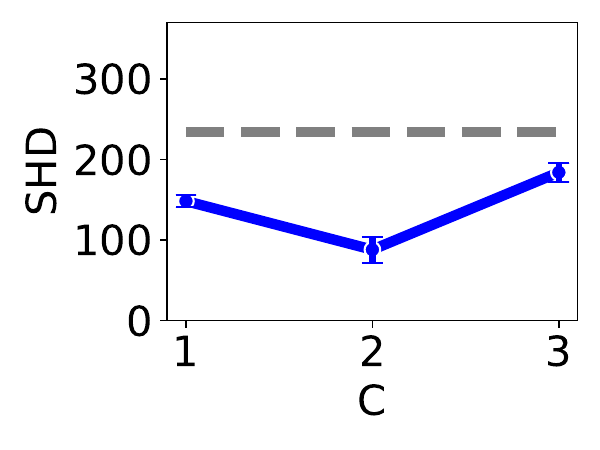}}
    \subfigure[][\label{fig: c graph} Directed graph]{\includegraphics[width=0.2\columnwidth]{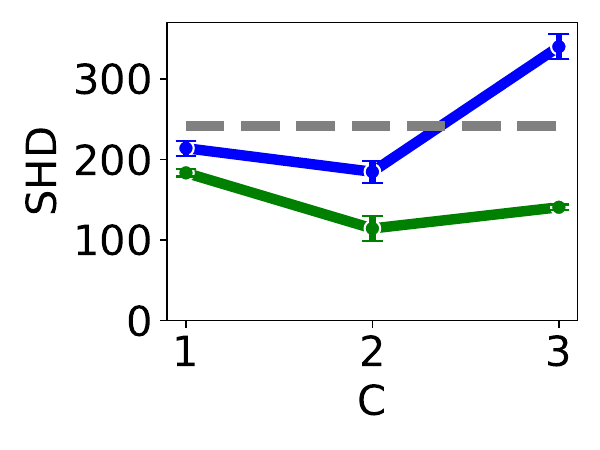}}
    \caption{\textbf{DAT-Graph best learns graphs when using two moments in its variance explained statistic.} We perform the experiment in Fig.~\ref{fig:intervention} with $N=200$ intervening on 50\% of variables.
    We use DAT with the statistic $\sum_{c=1}^CE\mathbb V(T_c(X);Y|\tilde Z_\psi)$ where $T_c(x)=x^c$.
    We plot the mean SHD and standard deviation across 3 replicates.
    The legend is the same as that of Fig.~\ref{fig:observation}.} \label{fig:robust c}
\end{figure}

\subsection{Experiments with small $N$}\label{sec: small N}
Here we show that DAT-Graph can accurately learn graphs at small $N$.
We compare to a classical testing approach with a flexible conditional independence test (PC), a classical explicit graph search procedure (GES), a classical hybrid method (CAM), and a modern method that make learns conditional independence relations by looking for non-linear interactions (NoGAM).

We trained all of these models on data with $N=30$ and $s=3$.
We could not scale NoGAM to learn from 10000 datapoints in under 15 hours of wall time so we trained all models on 6000 datapoints.
The exception is the PC algorithm, which due to the cost of the flexible conditional independence test, could not learn from 600 datapoints in under 15 hours of wall time so we trained this model on 300 datapoints.

In Table~\ref{tab: small N performance} we see that GES, CAM, and NoGAM perform worse than trivial.
This could be because the complex interactions in our data violate the Gaussianity assumption of GES, and the additivity assumption of CAM.
As well, since neural networks with ReLU activations are locally linear, the nonlinearity assumptions in NoGAM are also violated.
The PC algorithm performs slightly better than trivial -- its performance could be limited by its inability to scale to a larger training set size.
On the other hand, DAT-Graph and the Hybrid method are able to learn the graph most accurately.

\begin{table}[H]
\caption{\textbf{DAT-Graph can accurately learn a graph at low $N$.} 
	Mean SHDs and standard deviation for inferred graphs across 3 replicates.
 \label{tab: small N performance}} 
  \setlength\tabcolsep{3pt}
  \begin{center}
\begin{small}
\begin{sc}
\begin{tabular}{ c c c } 
	\toprule
	{Model} & {Skeleton SHD} & {Directed graph SHD}\\ \midrule
        Number edges & 60 & 60\\
	PC (300 datapoints) & 44±2 & 56±5\\
        GES & 99±85 & 104±23\\
        CAM & 79±33 & 87±18\\
        NoGAM & 79±24 & 85±14\\
	SDCD & 25± 4 & 26±4\\
	DAT-Graph & \textbf{8±1} & 16±4\\
        Hybrid & - & \textbf{11± 3}\\ \bottomrule
\end{tabular}
\end{sc}
\end{small}
\end{center}
\end{table}

\subsection{Alternative hybrid model for RNA sequencing data}\label{sec: rna performance second hybrid}

To demonstrate that the benefit of hybridizing DAT-Graph and SDCD did not simply come from restricting the edges learned by SDCD during training,
we also compare to another hybrid model --
in \textbf{SDCD(With Graph)} we use the graph inferred by another SDCD model in place of the one learned by DAT-Graph.
In table~\ref{tab: rna performance second hybrid} we show that the performance of this hybrid model is almost identical to that of SDCD.
\begin{table}[H]
\caption{\textbf{Improved prediction from using DAT-Graph does not come exclusively from training SDCD.} 
	We log the learned mutual information for three datasets as in Fig.~\ref{fig: rna performance}.
 \label{tab: rna performance second hybrid}} 
  \setlength\tabcolsep{3pt}
  \begin{center}
\begin{small}
\begin{sc}
\begin{tabular}{ c c c c } 
	\toprule
	{Model} & {Control} & IFN & {Co-culture}\\ \midrule
	SDCD & 5.2 & 12.7 & 4.3\\
    SDCD(WG) & 5.4  & 12.6 & 4.7\\
	Hybrid & \textbf{26.7} & \textbf{28.3} & \textbf{20.6}\\ \bottomrule
\end{tabular}
\end{sc}
\end{small}
\end{center}
\end{table}

\section{Theory}

\subsection{Proof of Prop.~\ref{prop: np hard}} \label{app: proof np hard}
\begin{proposition}
    \textbf{(Proof of Prop.~\ref{prop: np hard})}
    Even when restricted to the case where $X, Y, Z_1, \dots,Z_M$ are jointly Gaussian with known non-singular covariance matrix, the separating set selection problem is NP-Hard.
\end{proposition}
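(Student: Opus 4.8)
The plan is to reduce from \textbf{SUBSET SUM}, a classical NP-complete problem: given integers $w_1, \dots, w_M$ and a target $t$, decide whether some subset sums to $t$. The key observation is that for jointly Gaussian variables, $X \indep Y \mid \{Z_m\}_{m \in S}$ holds if and only if the conditional covariance $\cov(X, Y \mid \{Z_m\}_{m \in S})$ vanishes, and this conditional covariance takes a clean additive form when the $Z_m$ are independent. So I would engineer a covariance matrix in which choosing a separating set is equivalent to solving the subset sum instance.

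Concretely, I would take $Z_1, \dots, Z_M$ to be independent standard Gaussians and set $X = \sum_m w_m Z_m + \eta_X$ and $Y = \sum_m Z_m + \eta_Y$, where $\eta_X, \eta_Y$ are chosen so that the residual covariance of $(X, Y)$ after removing the $Z_m$ equals $\bigl(\begin{smallmatrix} \sigma_X^2 & -t \\ -t & \sigma_Y^2 \end{smallmatrix}\bigr)$; that is, $X$ and $Y$ carry a direct dependence of $-t$ not mediated by any $Z_m$. The induced covariance matrix $\Sigma$ over $(X, Y, Z_1, \dots, Z_M)$ then has entries $\cov(Z_i, Z_j) = \delta_{ij}$, $\cov(X, Z_i) = w_i$, $\cov(Y, Z_i) = 1$, and $\cov(X, Y) = \sum_m w_m - t$. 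Picking $\sigma_X^2 = \sigma_Y^2 = |t| + 1$ makes the residual block, and hence $\Sigma$, positive definite (so non-singular), and all entries have size polynomial in the instance, so the reduction runs in polynomial time.

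The heart of the argument is the Schur-complement computation of the conditional covariance. Since the $Z_m$ are independent with unit variance, $\cov(\{Z_m\}_{m \in S})^{-1}$ is the identity, so $\cov(X, Y \mid \{Z_m\}_{m \in S}) = \cov(X, Y) - \sum_{m \in S} w_m = -t + \sum_{m \notin S} w_m$. Writing $T = \{1, \dots, M\} \setminus S$, this conditional covariance vanishes exactly when $\sum_{m \in T} w_m = t$. As $S$ ranges over all subsets, $T$ ranges over all subsets, so a separating set exists if and only if the subset sum instance is solvable. The direct dependence $-t \neq 0$ (for $t > 0$) blocks the otherwise trivial choice $S = \{1, \dots, M\}$, which would give $T = \emptyset$ and sum $0$. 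Combined with the standard Gaussian equivalence between conditional independence and vanishing partial covariance, this establishes the reduction.

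The main obstacle I anticipate is not the combinatorics but the bookkeeping needed to guarantee that the constructed $\Sigma$ is a genuine non-singular covariance matrix while preserving the exact subset-sum equivalence; I would handle this by verifying positive definiteness of the residual $2 \times 2$ block and checking that the Schur-complement identity holds verbatim. A secondary point is to confirm that SUBSET SUM in the standard binary-encoded sense already yields NP-hardness, which it does; if strong NP-hardness were desired one could instead reduce from \textbf{3-PARTITION} using the very same covariance construction.
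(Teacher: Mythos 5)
Your proposal is correct and follows essentially the same route as the paper: a reduction from subset sum in which the $Z_m$ are independent standard Gaussians, $X$ and $Y$ carry a direct (unmediated) covariance of $-t$, and the Schur-complement identity gives $\cov(X, Y \mid \{Z_m\}_{m \in S}) = -t + \sum_{m \notin S} w_m$, so a separating set exists iff the instance is solvable. The only differences are cosmetic --- the paper realizes the direct dependence by injecting $-T\epsilon_X$ into $Y$ rather than via correlated residuals, and your explicit attention to positive definiteness and integer encoding slightly tightens points the paper leaves implicit.
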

\begin{proof}
    We will show that the subset sum problem, which is known to be NP-hard, reduces to the above problem.
    The subset sum problem is as follows: given a set of numbers $a_1, \dots, a_M, T\in\mathbb{R}$, is there a subset $S\subset \{a_m\}_{m=1}^M$ such that $\sum_{a\in S}a=T$?

    Let $\epsilon_X, \epsilon_Y, Z_1, \dots, Z_M$ be jointly independent Gaussian variables with variance one, let $X=\epsilon_X+\sum_{m=1}^M Z_m$ and $Y=\epsilon_Y-T\epsilon_X+\sum_{m=1}^M a_mZ_m$.
    Now if $S\subset \{Z_m\}_{m=1}^M$ then $\cov(X, Y|S)=-T+\sum_{Z_m\not \in S}a_m.$
    So if there is a $S\subset \{Z_m\}_{m=1}^M$ such that $X\indep Y|S$ then $0=\cov(X, Y|S)=-T+\sum_{Z_m\not \in S}a_m$ and
    $\sum_{Z_m\not\in S}a_m=T$.
    Similarly if there is no such subset then the answer to the subset sum problem is negative.
\end{proof}

\subsection{Proof of Prop.~\ref{prop: reduce ad to moral}}\label{app: proof reduce ad to moral}
\begin{proposition}
    \textbf{(Proof of Prop.~\ref{prop: reduce ad to moral})}
    Assume $p$ is faithful. $X^n$ and $X^m$ are adjacent in $G$ if and only $X^n\notindep X^m\ |\ U$ for any $U\subset\MB(X^n)\setminus\{X^m\}$.
    If $X^n\indep X^m\ |\ U$ for some $U\subset\MB(X^n)\setminus\{X^m\}$ then $X^n\indep X^m\ |\ U\cup \Pa_G(X^n)\setminus\{X^m\ |\ \Dec_G(X^m)=\emptyset\}$.
\end{proposition}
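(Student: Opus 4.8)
The plan is to prove the two assertions separately, moving freely between conditional independence and $d$-separation via faithfulness: recall that $d$-separation always implies conditional independence (Markov property), and faithfulness supplies the converse. I will lean throughout on the moralization criterion, namely that $A\indep B\mid C$ holds (as $d$-separation) exactly when $A$ and $B$ are separated by $C$ in the moral graph $G^m$ of the smallest ancestral set $\mathrm{An}(\{A,B\}\cup C)$.

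For the forward direction of the biconditional, suppose $X^n$ and $X^m$ are adjacent. An edge between them cannot be blocked by any conditioning set, so no set $d$-separates them; by faithfulness $X^n\notindep X^m\mid U$ for \emph{every} $U$, in particular for every $U\subseteq\MB(X^n)\setminus\{X^m\}$. For the backward (contrapositive) direction I would show that if $X^n,X^m$ are non-adjacent then a separating set lives inside $\MB(X^n)\setminus\{X^m\}$. I take $U$ to be the set of neighbors of $X^n$ in $G^m$, the moral graph of $\mathrm{An}(\{X^n,X^m\})$, and argue in three steps: (i) $X^m$ is \emph{not} a neighbor of $X^n$ in $G^m$, since there is no $G$-edge and the two are never ``married'', because any common child $C$ giving $X^n\to C\leftarrow X^m$ cannot lie in $\mathrm{An}(\{X^n,X^m\})$ without forcing a cycle, so the moralizing edge is never added; (ii) the neighbors of $X^n$ in $G^m$ consist only of parents, ancestral-set children, and co-parents of children of $X^n$, hence all lie in $\MB(X^n)$, and by (i) none equals $X^m$; (iii) in any undirected graph the neighborhood of a vertex separates it from every non-neighbor, so $U$ separates $X^n$ from $X^m$ in $G^m$, giving $d$-separation and thus $X^n\indep X^m\mid U$ with $U\subseteq\MB(X^n)\setminus\{X^m\}$. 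The main obstacle is precisely the case where $X^m\in\MB(X^n)$ as a spouse (possibly also a descendant of $X^n$), which is exactly where a naive choice such as $\Pa_G(X^n)$ can fail; the ancestral-set viewpoint dissolves it, because the shared child witnessing the spouse relation drops out of $\mathrm{An}(\{X^n,X^m\})$.

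For the second statement I would split ``include parents, exclude sinks'' into two monotonicity facts. \textbf{(a) Adjoining $\Pa_G(X^n)$ preserves separation.} Since every parent of $X^n$ already lies in $\mathrm{An}(X^n)$, the ancestral set $\mathrm{An}(\{X^n,X^m\}\cup U)$ is unchanged when $\Pa_G(X^n)$ is added to the conditioning set, so the relevant moral graph is identical; and in a fixed undirected graph enlarging the separator can only preserve separation. Hence $X^n\indep X^m\mid U$ implies $X^n\indep X^m\mid U\cup\Pa_G(X^n)$. \textbf{(b) Deleting sinks preserves separation.} A sink $L$ with $\Dec_G(L)=\emptyset$ has no outgoing edges, so whenever it lies on a path it must be a collider; consequently $L$ in the conditioning set can only \emph{open} colliders, never block a path. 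Concretely, any path blocked by a set $S$ is still blocked by $S\setminus\{L\}$ (a blocking non-collider is never $L$, and a blocking collider with no $S$-descendant still has no descendant in the smaller set), so $X^n\indep X^m\mid S$ implies $X^n\indep X^m\mid S\setminus\{L\}$.

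Finally I would combine these: apply (a) and then (b) to $S=U\cup\Pa_G(X^n)$, removing all sinks in turn, and note that no parent of $X^n$ is a sink (each has the child $X^n$), so the two operations do not interfere and $(U\cup\Pa_G(X^n))\setminus\{\text{sinks}\}=(U\setminus\{\text{sinks}\})\cup\Pa_G(X^n)$. This yields $X^n\indep X^m\mid(U\cup\Pa_G(X^n))\setminus\{X^k:\Dec_G(X^k)=\emptyset\}$, the stated conclusion. I expect (a) and (b) to be routine once phrased through moralization and the collider-at-sinks observation; the genuine work, and the step I would write most carefully, is the spouse/descendant case of the backward biconditional.
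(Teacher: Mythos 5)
Your proof is correct, but it takes a genuinely different route from the paper's. The paper (adapting Margaritis--Thrun) argues directly in d-separation calculus: after dispatching the easy cases, it splits on whether the spouse $X^m$ is a descendant of $X^n$ and exhibits an explicit separating set --- $\Pa_G(X^n)$ in the non-descendant case (local Markov property), and $\Pa_G(X^n)$ together with the children of $X^n$ that are ancestors of $X^m$ plus the parents of those children in the descendant case --- then derives a contradiction by following a hypothetical d-connecting path out of $X^n$ to its first collider. You instead route everything through the moralization criterion of Lauritzen et al.: your separator is the neighborhood of $X^n$ in the moral graph of $\mathrm{An}(\{X^n,X^m\})$, with the spouse/descendant difficulty dissolved by the observation that a common child of $X^n$ and $X^m$ can never lie in that ancestral set without creating a cycle, so no moralizing edge joins them. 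Both proofs hinge on the same phenomenon --- the shared children that make $X^m$ a spouse become irrelevant under the right conditioning --- but yours replaces the path-chasing case analysis with the generic fact that a vertex's neighborhood separates it from every non-neighbor in an undirected graph. A further difference worth noting: the paper's proof establishes the second claim only implicitly, in that its constructed separating sets happen to contain all of $\Pa_G(X^n)$ and no sinks, whereas your two monotonicity lemmas (adjoining $\Pa_G(X^n)$ leaves the ancestral set, hence the moral graph, unchanged, and undirected separation is monotone under enlargement; a sink is necessarily a collider on any path, so deleting it from a conditioning set preserves blocking) prove the claim as literally stated, for an arbitrary separating set $U$ --- which is the form actually invoked in Section~\ref{sec: intervention with known targets}. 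The price of your approach is reliance on the moralization theorem as an external tool, where the paper stays within elementary d-separation arguments and produces a concrete separating set.
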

\begin{proof}
    (Adapted from \citet{Margaritis1999-vu})
    This result is obvious if $X^n$ and $X^m$ are adjacent or $X^m\not\in\MB(X^n)$, so assume $X^m$ is a spouse of $X^n$.
    If $X^m$ is not a descendant of $X^n$, then if $U=\Pa_G(X^n)$ then $X^n\indep X^m\ |\ U$.
    If $X^m$ is a descendant of $X^n$ then include in $U$ $\Pa_G(X^n)$ and all variables in $\Ch_G(X^n)$ that are ancestors of $X^m$ and the parents of these variables.
    Say we have a d-connecting path from $X^n$ to $X^m$.
    Since we have conditioned on all parents of $X^n$ the path must have an arrow out of $X^n$.
    Say the first edge is $X^n\rightarrow X^k$.
    By the definition of $U$, $X^k$ cannot be an ancestor of $X^m$, so the path must eventually encounter its first collider at an $X^l\in U$.
    By the definition of $U$, $X^l$ must be a child of $X^n$ that is an ancestor of $X^m$ or a parent of such a child.
    In either case, $X^l$ is an ancestor of $X^m$.
    Thus, $X^k$ is an ancestor of $X^m$, a contradiction.    
\end{proof}

\subsection{Counter-examples}~\label{app: counterexamples}

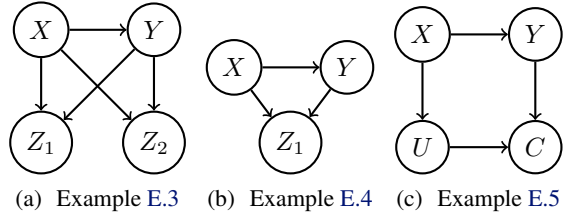
\begin{figure}[H]
    \centering
    \subfigure[][\label{fig: graph 1} Example~\ref{ex: r problem app}]{
        \begin{tikzpicture}[node distance={15mm}, thick, main/.style = {draw, circle}] 
            \centering
            \node[main] (x) {$X$}; 
            \node[main] (y) [right of=x] {$Y$}; 
            \node[main] (z1) [below of=x] {$Z_1$}; 
            \node[main] (z2) [below of=y] {$Z_2$}; 
        
            \draw[->] (x)--(y);
            \draw[->] (x)--(z1);
            \draw[->] (x)--(z2);
            \draw[->] (y)--(z1);
            \draw[->] (y)--(z2);
        \end{tikzpicture}
    }
    \subfigure[][\label{fig: graph 2} Example~\ref{ex: noise problem}]{
        \begin{tikzpicture}[thick, node distance={15mm}, main/.style = {draw, circle}] 
            \centering
            \node[main] (x) {$X$}; 
            \node[main] (y) [right of=x] {$Y$}; 
            \node[main] (z1) at (0.75, -1.) {$Z_1$}; 
        
            \draw[->] (x)--(y);
            \draw[->] (x)--(z1);
            \draw[->] (y)--(z1);
        \end{tikzpicture}
    }
    \subfigure[][\label{fig: graph 3} Example~\ref{ex: dg pro mb app}]{
        \begin{tikzpicture}[node distance={15mm}, thick, main/.style = {draw, circle}] 
            \centering
            \node[main] (x) {$X$}; 
            \node[main] (y) [right of=x] {$Y$}; 
            \node[main] (u) [below of=x] {$U$}; 
            \node[main] (c) [below of=y] {$C$}; 
        
            \draw[->] (x)--(y);
            \draw[->] (x)--(u);
            \draw[->] (u)--(c);
            \draw[->] (y)--(c);
        \end{tikzpicture}
    }
    \caption{\textbf{
    Graphs considered in counterexamples.}} \label{fig: graphs}
\end{figure}

\begin{example}\label{ex: r problem app}
    \textbf{(Proof of Example~\ref{ex: r problem})}
    There are jointly Gaussian variables $X, Y, Z_1, Z_2$ that are faithful to some graph such that $X\notindep Y|\{Z_m\}_{m\in S}$ for any $S\subset\{1, 2\}$ but if $\{r_\psi\}_{\psi\in\Psi}$ is the space of linear functions, there is a $\psi^*$ such that $X\indep Y|r_{\psi^*}(Z_m)_{m=1}^M$.
\end{example}
\begin{proof}
    Let $\epsilon_X, \epsilon_Y, \epsilon_{Z_1}, \epsilon_{Z_2}\sim N(0, 1)$ iid.
    Define $X=\epsilon_X$, $Y=X+\epsilon_Y$, $Z_1=X+Y+\sqrt{\frac{1}{2}}\epsilon_{Z_1}$ and $Z_2=X+Y+\sqrt{3+\frac{1}{2}}\epsilon_{Z_2}$.
    The covariance matrix of $\{X, Y, Z_1, Z_2\}$ is
    \begin{equation*}
    \begin{bmatrix}
    1 & 1 & 2 & 2 \\
    1 & 2 & 3 & 3 \\
    2 & 3 & 5+\frac 1 2 & 5 \\
    2 & 3 & 5 & 8+\frac 1 2
    \end{bmatrix}.
    \end{equation*}
    Then one can calculate that
    \begin{gather}
        \cov(X, Y) = 1, \cov(X, Y|Z_1) = 1 - \frac{6}{5+\frac 1 2}, \cov(X, Y|Z_2) =  1 - \frac{6}{8+\frac 1 2}, \cov(X, Y|Z_1, Z_2) =1-\frac{24}{21+\frac{3}{8}}.
    \end{gather}
    so $X\notindep Y|S$ for any $S\subset \{Z_1, Z_2\}$. Further calculation shows that this is faithful to the graph in Fig.~\ref{fig: graph 1}.
    Now define $Z_3 = \frac 1 2 (Z_1+Z_2)$.
    The covariance matrix of $\{X, Y, Z_3\}$ is 
    \begin{equation*}
    \begin{bmatrix}
    1 & 1 & 2  \\
    1 & 2 & 3  \\
    2 & 3 & 6
    \end{bmatrix}.
    \end{equation*}
    Thus $\cov(X, Y|Z_3) = 0$ so $X\indep Y|Z_3$.
\end{proof}

\begin{example}\label{ex: noise problem}
    There are jointly Gaussian variables $X, Y, Z_1$ that are faithful to some graph such that $X\notindep Y$ and $X\notindep Y\ |\ Z_1$ but if $f_1$ is a Gaussian density then there is a $\psi^*\in[0, 1]$ such that $X\indep Y|\tilde Z_{1, \psi^*}$.
\end{example}
\begin{proof}
    Let $\epsilon_X, \epsilon_Y, \epsilon_{Z_1}\sim N(0, 1)$ iid.
    Define $X=\epsilon_X$, $Y=X+\epsilon_Y$, $Z_1=X+Y+\sqrt{\frac{1}{2}}\epsilon_{Z_1}$.
    The covariance matrix of $\{X, Y, Z_1\}$ is
    \begin{equation*}
    \begin{bmatrix}
    1 & 1 & 2 \\
    1 & 2 & 3 \\
    2 & 3 & 5+\frac 1 2
    \end{bmatrix}.
    \end{equation*}
    Then one can calculate that this distribution is faithful to the graph in Fig.~\ref{fig: graph 2}.
    Now pick $\psi_1^* = 2-\sqrt{2}$ and define $\tilde Z_1 = \psi_1^*Z_1+(1-\psi_1^*)N_1$ where $N_1$ is an independent standard normal.
    $\psi_1^* \epsilon_{Z_1}+(1-\psi_1^*)N_1$ is a mean zero normal distribution of variance
    $$(2-\sqrt{2})^2\frac 1 2 + (\sqrt{2}-1)^2=(2-\sqrt{2})^2\frac 1 2 + (2-\sqrt{2})^2\frac 1 2=(2-\sqrt{2})^2=\psi_1^{*2}.$$
    Thus $\tilde Z_1 =\psi_1^{*}(X+Y+U)$ for some independent standard normal $U$.
    The covariance matrix of $\{X, Y, Z_3\}$ is 
    \begin{equation*}
    \begin{bmatrix}
    1 & 1 & 2\psi_1^{*}  \\
    1 & 2 & 3\psi_1^{*}  \\
    2\psi_1^{*} & 3\psi_1^{*} & 6\psi_1^{*2}.
    \end{bmatrix}.
    \end{equation*}
    Thus $\cov(X, Y|\tilde Z_1) = 0$ so $X\indep Y|\tilde Z_1$.
\end{proof}

\begin{figure}[H]
    \centering
    \subfigure[][\label{fig: dat moral} Moral Graph]{
        \begin{tikzpicture}[node distance={15mm}, thick, main/.style = {draw, circle}] 
            \centering
            \node[main] (x) {$X$}; 
            \node[main] (y) [right of=x] {$Y$}; 
            \node[main] (u) [below of=x] {$U$}; 
            \node[main] (c) [below of=y] {$C$}; 
        
            \draw[-] (x)--(y);
            \draw[-] (x)--(u);
            \draw[-] (u)--(c);
            \draw[-] (y)--(c);
            \draw[-] (y)--(u);
        \end{tikzpicture}
    }
    \subfigure[][\label{fig: dat skel} Skeleton]{
        \begin{tikzpicture}[node distance={15mm}, thick, main/.style = {draw, circle}] 
            \centering
            \node[main] (x) {$X$}; 
            \node[main] (y) [right of=x] {$Y$}; 
            \node[main] (u) [below of=x] {$U$}; 
            \node[main] (c) [below of=y] {$C$}; 
        
            \draw[-] (x)--(y);
            \draw[-] (x)--(u);
            \draw[-] (u)--(c);
            \draw[-] (y)--(c);
        \end{tikzpicture}
    }
    \subfigure[][\label{fig: dat graph} Directed graph]{
        \begin{tikzpicture}[node distance={15mm}, thick, main/.style = {draw, circle}] 
            \centering
            \node[main] (x) {$X$}; 
            \node[main] (y) [right of=x] {$Y$}; 
            \node[main] (u) [below of=x] {$U$}; 
            \node[main] (c) [below of=y] {$C$}; 
        
            \draw[-] (x)--(y);
            \draw[-] (x)--(u);
            \draw[->] (u)--(c);
            \draw[->] (y)--(c);
        \end{tikzpicture}
    }
    \caption{\textbf{
    Stages of DAT-Graph in Example.~\ref{ex: dg pro mb}}} \label{fig: graphs 2}
\end{figure}
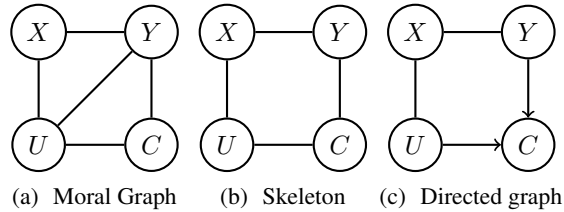

\begin{example}\label{ex: dg pro mb app}
    \textbf{(Proof of Example~\ref{ex: dg pro mb})}
    There are a set of four jointly Gaussian random variables that are not faithful such that DAT-Graph recovers the correct graph.
\end{example}
\begin{proof}
    Let $\epsilon_X, \epsilon_Y, \epsilon_{U}, \epsilon_{C}\sim N(0, 1)$ iid.
    Define $X = \epsilon_X$, $Y = X+\epsilon_Y$, $U = X+\epsilon_U$, $C=2Y+U+\epsilon_C$.
    $\{X, Y, U, C\}$ have covariance matrix
    \begin{equation*}
    \begin{bmatrix}
    1 & 1 & 1 & 3 \\
    1 & 2 & 1 & 5 \\
    1 & 1 & 2 & 4 \\
    3 & 5 & 4 & 15
    \end{bmatrix}.
    \end{equation*}
    $\{X, Y, U, C\}$ are generated according to the graph in Fig.~\ref{fig: graph 3}.
    However, since $\cov(X, Y|C)=0$ this distribution is not faithful to the graph.
    Note it is also not adjacency faithful \cite{Uhler2012-pz}.
    When a distribution is not faithful it can correspond to multiple graph equivalence classes, so the notion of a ``correct'' graph is more delicate.
    We show that the graph in Fig.~\ref{fig: graph 3} is correct in the sense that it is a sparsest Markov graph \cite{Raskutti2018-fo}.
    Then we show that DAT-Graph recovers this ``correct'' graph.
    
    Say $G$ is a graph that is Markov for this distribution, that is, it satisfies Eqn.~\ref{eq: markov eqn} and therefore, for any disjoint $A, B, C\subset \{X, Y, U, C\}$ if $A$ is d-separated from $B$ in $G$ by $C$, then $A\indep B\ |\ C$.
    We have
    $$\cov(X, U)=1, \cov(X, U|Y)=\frac 1 2, \cov(X, U|C)=\frac 3 {15}, \cov(X, U|Y, C)=\frac{59}{55}$$
    so $X$ must be connected to $U$ in $G$.
    Next we have
    $$\cov(Y, C)=5, \cov(Y, C|X)=2, \cov(Y, C|U)=3, \cov(Y, C|X, U)=2$$
    so $Y$ must be connected to $C$ in $G$.
    Finally we have
    $$\cov(U, C)=4, \cov(U, C|X)=1, \cov(U, C|Y)=\frac 3 2, \cov(U, C|X, Y)=1$$
    so $U$ must be connected to $C$ in $G$.
    These three cannot be the only edges in $G$ however as $\cov(X, Y), \cov(X, Y|U)\neq 0$.
    Thus the graph in Fig.~\ref{fig: graph 3} is a sparsest Markov graph.

    We now show that DAT-Graph recovers the graph in Fig.~\ref{fig: graph 3}.
    Some calculations show
    $$\MB(X)=\{U, Y\}, \MB(Y)=\{X, C, U\}, \MB(U)=\{X, C, Y\}, \MB(C)=\{U, Y\}.$$
    DAT-Graph joins nodes $Z_1, Z_2$ in the moral graph if $Z_1\in\MB(Z_2)$ or $Z_2\in\MB(Z_1)$, so we get the moral graph in Fig.~\ref{fig: dat moral}.
    
    Next we learn the skeleton.
    Since $\cov(U, Y|X)=0$, the edge between $Y$ and $U$ is removed with $\sepset(Y, U)=\{X\}$;
    Note $\cov(U, Y|X, C), \cov(U, Y|C)\neq 0$ so $C$ can not be in $\sepset(Y, U)$.
    There is a subset of $\MB(Y)$ that makes $X$ and $Y$ conditionally independent, namely $\cov(X, Y|C)=0$.
    However, $\cov(X, Y),\cov(X, Y|U)\neq 0$ so there is no such subset of $\MB(X)$.
    DAT-Graph only removes an edge if both adjacency tests decide there is no edge.
    Therefore we learn the skeleton in Fig.~\ref{fig: dat skel}.

    Finally, DAT-Graph uses the removed edge between $Y$ and $U$ to find v-structures.
    Since $C\not\in\sepset(Y, U)$, we must have a v-structure $Y\rightarrow C\leftarrow U$.
    All other triplets are labelled non-v-structures.
    Thus we learn the graph equivalence class in Fig.~\ref{fig: dat graph}.
    This is the equivalence class of the ``correct'' graph Fig.~\ref{fig: dat skel}.
\end{proof}

\subsection{Proof of Theorem~\ref{Thm: main reliability}}\label{app: main proof}

In this section we will consider a set of random variables $X, Y, Z_1, \dots, Z_M$ in $\Reals$.
Calling $Z = (Z_m)_{m=1}^M$, we are interested in the separating set selection problem -- evaluating if there is a ``separating set'' of variables $S\subset Z$ such that $X\indep Y|S$.
This is a challenging problem because it is discrete so, we relax it into the separating representation search.
We define independent noise variables $N_1, \dots, N_M$ and define noised versions of $Z$ by $\tilde Z_{m, \psi_m} = \psi_m Z_m + (1-\psi_m)N_m$ where $\psi_m\in[0, 1]$ are variables that control how much information about $Z_m$ we obtain by observing $\tilde Z_{m, \psi_m}$.
We now are interested in finding $\psi=(\psi_m)_m$ such that $X\indep Y|\tilde Z_\psi.$

If $S$ is a separating set then picking $\psi_m=\mathbbm{1}(Z_m\in S)$ gives $X\indep Y|\tilde Z_\psi$ so there is also a separating representation.
However, the contrary might not be true as shown Example~\ref{ex: noise problem}
Here in Thm.~\ref{thm: main thm} we show that if $(f_m)_m$ have thick enough tails then if there is a separating representation there is also a separating set.
Furthermore, if $X\indep Y|\tilde Z_\psi$ then we can recover a separating set $\{Z_m\}_{\psi_m=1}$.

The idea of the proof is that if $f_m$ have thick tails then the values of the noise $N_m$ can be large.
Then if we observe a large $\tilde Z_{m, \psi_m}$ all we can conclude is that $N_m$ took a large value -- we learn little about $Z_m$.
Thus, conditioning on $(\tilde Z_{m, \psi_m})_m$ is similar to observing $Z_m$ if $\psi_m=1$ and not observing $Z_m$ if $\tilde Z_{m, \psi_m}$ is large and $\psi_m<1$.
Thus, if $X\indep Y|\{\tilde Z_{m, \psi_m}\}_m$ then $X\indep Y|\{ Z_{m}\}_{\psi_m=1}$.

\paragraph{Statement and verification of assumption tails of $f_1, \dots, f_M$}
We make the following assumption.
\begin{assumption}\label{ass: tail assump}
    \textbf{($f_m$ has thicker tails than $p(z)$)}
    We assume $(f_m)_m$ are positive bounded symmetric functions on $\mathbb R$ that are decreasing and piece-wise differentiable on $(0, \infty)$.
    \begin{itemize}
        \item We assume the derivative of $\log f_m$ approaches $0$ and for any value $z_m>0$, $(\log f_m)'(z_m) \leq (\log f_m)'(z_m+1)$.
        \item For any subset $S\subset\{1, \dots, M\}$ and almost any set of values $(z_m)_{m\not\in S}$ define the tail probabilities $p(L)=p(\Vert (z_m)_{m\in S}\Vert_\infty\geq L|(z_m)_{m\not\in S})$.
        We assume that for any set of positive numbers, $(h_m)_{m\in S}$,
        \begin{equation}\label{eq: thin tails}
            \sum_{L=0}^{\infty}\frac{p(L+R)}{\prod_{m=1}^M f_m(h_m(L+2R+1))}\to 0\text{ as }R\to\infty.
        \end{equation}
    \end{itemize}

\end{assumption}
The condition on the derivative of $\log f_m$ assumes that $f_m$ is thicker than $f_m(z_m)\propto \exp(-|z_m|)$.
Equation~\ref{eq: thin tails} assumes that the tail of $p(z)$ decreases faster than the tail of $f_m$.
It essentially assumes that ``$f_m$ has thicker tails than $p(z)$''.

Although its statement is technical, the assumption is easy to satisfy because A) we often have some idea of the tails of $p$, or can measure them, and B) we can pick $f$ to have tails as thick as we would like.

We are often willing to assume that $p$ is sub-Gaussian or sub-exponential; for example, RNA counts, based on knowledge of the biological generating process, are regularly assumed to come from a Poisson or negative binomial distribution, which are sub-exponential \citep{Lopez2018-tb}.
Below in Appendix~\ref{sec: assump in exp} we verify the assumption in the setting of our experiments in Section~\ref{sec: experiments} using only the fact that $p$ is sub-Gaussian; a similar argument can be used to verify the assumption in the case that $p$ is sub-exponential. 
When one is not willing to make such an assumption, there are a number of methods to estimate the tails of a distribution. 
Once we have an idea of the thickness of $p$’s tails, we can pick $f$ to be as thick as necessary. 
We hypothesize that there is a tradeoff such that an $f$ with thin tails may lead to separating representations when there are no separating sets, while an $f$ with very thick tails may make training unstable, or reduce statistical efficiency; thus in our experiments in Section~\ref{sec: experiments} we pick an $f$ that is just thick enough to satisfy the assumption.

\paragraph{Statement and proof of theorem}
To construct our noised variables $\tilde Z$, we first pick our noise densities $f_1, \dots, f_M$ and our noise parameters $\psi_1, \dots, \psi_M\in [0, 1]$.
Then we observe noised variables $\tilde z_m = \psi_m z_m+(1-\psi_m)n_m$ where $n_m\sim f_m(n_m)dn_m$.
Thus, if $\psi_m<1$, $p(\tilde z_m|z_m)\propto f_m\left(\frac{1}{1-\psi_m}\tilde z_m-\frac{\psi_m}{1-\psi_m}z_m\right)$.
Thus we get a posterior
$$p((z_m)_m|(\tilde z_m)_m, (\psi_m)_m, (f_m)_m)\propto \prod_{m\ |\ \psi_m\neq 1}f_m\left(\frac{1}{1-\psi_m}\tilde z_m-\frac{\psi_m}{1-\psi_m}z_m\right)dp((z_m)_{\psi_m<1}|(z_m)_{\psi_m=1}).$$

Now we can prove the theorem.
\begin{theorem}\label{thm: main thm}
    Assume Assumption~\ref{ass: tail assump}.
    If there is a $(\psi_m)_m$ such that $X\indep Y|\tilde Z_\psi$ then $X\indep Y|\{Z_m\}_{\psi_m=1}$.
\end{theorem}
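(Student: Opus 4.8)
The plan is to fix the fully observed coordinates and push the noised coordinates off to infinity, using the thick tails to argue that the extra conditioning information they carry vanishes in the limit. Write $S=\{m:\psi_m=1\}$ and $T=\{m:\psi_m<1\}$, so that $\tilde Z_{m,\psi_m}=Z_m$ for $m\in S$ and the hypothesis $X\indep Y\mid \tilde Z_\psi$ reads $X\indep Y\mid Z_S,\tilde Z_T$, while the goal is $X\indep Y\mid Z_S$. I would prove this pointwise: for almost every value $z_S$ I want $p(x,y\mid z_S)=p(x\mid z_S)\,p(y\mid z_S)$, and I would obtain it as the limit of the factorization $p(x,y\mid z_S,\tilde z_T)=p(x\mid z_S,\tilde z_T)\,p(y\mid z_S,\tilde z_T)$ (valid for a.e.\ $\tilde z_T$ by hypothesis) as $\|\tilde z_T\|_\infty\to\infty$; by the symmetry of the $f_m$ it suffices to send each $\tilde z_m\to+\infty$.

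The first ingredient is that the test noise is external to the system: given all of $Z$, the variable $\tilde Z_T=\psi_T Z_T+(1-\psi_T)N_T$ is a function of $Z_T$ and the independent noise $N_T\indep(X,Y,Z)$, so $(X,Y)\indep \tilde Z_T\mid Z$. This lets me disintegrate each conditional law over the posterior of the hidden coordinates,
\[
p(x,y\mid z_S,\tilde z_T)=\int p(x,y\mid z_S,z_T)\,p(z_T\mid z_S,\tilde z_T)\,dz_T,
\]
and identically for the marginals $p(x\mid z_S,\tilde z_T)$ and $p(y\mid z_S,\tilde z_T)$. Since $p(x,y\mid z_S)$ and its marginals admit the same representation with the prior $p(z_T\mid z_S)$ in place of the posterior, the whole problem reduces to showing that the posterior converges to the prior, $p(z_T\mid z_S,\tilde z_T)\to p(z_T\mid z_S)$, as $\tilde z_T\to+\infty$.

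For this convergence I would analyse the posterior density, which is proportional to $\big[\prod_{m\in T}f_m\big(\tfrac{\tilde z_m-\psi_m z_m}{1-\psi_m}\big)\big]\,p(z_T\mid z_S)$. On any bounded set of $z_T$ the likelihood becomes asymptotically flat: comparing two values via the mean value theorem, the log-likelihood difference is $\sum_{m\in T}(\log f_m)'(\xi_m)\cdot\tfrac{\psi_m(z_m'-z_m)}{1-\psi_m}$, and since the arguments $\xi_m\to\infty$ while $(\log f_m)'\to0$ by Assumption~\ref{ass: tail assump}, this difference tends to $0$, so the likelihood ratio tends to $1$ and dominated convergence delivers the prior on bounded sets. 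The delicate region is where $z_T$ itself is large—there the argument of $f_m$ can fall back into the bulk and the likelihood need not be flat—and this is exactly what the tail condition in Equation~\ref{eq: thin tails} controls. Discretizing $z_T$ into shells $L\le\|z_T\|_\infty<L+1$ and comparing the likelihood on each shell against a fixed reference region, the posterior mass escaping to large $z_T$ is bounded by a series of the form $\sum_{L}p(L+R)\big/\prod_m f_m\big(h_m(L+2R+1)\big)$, with $R$ the scale of $\tilde z_T$ and $h_m$ absorbing the factors $\psi_m/(1-\psi_m)$, which tends to $0$ as $R\to\infty$ by assumption. Combining the bulk estimate with this tail bound yields the posterior convergence.

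Finally I would pass to the limit in the factorization. Choosing a sequence $\tilde z_T^{(k)}\to+\infty$ along which the hypothesis factorization holds (its exceptional set has measure zero) and along which the three disintegrations converge, the left side tends to $p(x,y\mid z_S)$ and the right side to $p(x\mid z_S)\,p(y\mid z_S)$, giving $X\indep Y\mid Z_S$ for a.e.\ $z_S$, i.e.\ $X\indep Y\mid\{Z_m\}_{\psi_m=1}$. The main obstacle is the posterior-convergence step: turning the pointwise flattening of the likelihood into genuine convergence of a probability measure on the unbounded space $\mathbb R^{|T|}$, for which the bulk/tail split and the summability in Equation~\ref{eq: thin tails} do the real work. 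The measure-theoretic bookkeeping needed to select one sequence $\tilde z_T^{(k)}$ serving all three disintegrations while avoiding the factorization's null set is a secondary but necessary technicality.
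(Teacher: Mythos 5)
Your proposal is correct and follows essentially the same route as the paper's proof: reduce the claim to showing that the posterior $p(z_T\mid z_S,\tilde z_T)$ converges to the prior $p(z_T\mid z_S)$ as the noised coordinates go to infinity, establish that convergence by splitting into a bulk region (where $(\log f_m)'\to 0$ flattens the likelihood) and a tail region controlled by the shell-decomposition bound of Equation~\ref{eq: thin tails}, and then pass the factorization to the limit. The only notable difference is cosmetic: you make explicit the disintegration step relying on $(X,Y)\indep \tilde Z_T\mid Z$ and the null-set bookkeeping in choosing the sequence $\tilde z_T^{(k)}$, both of which the paper uses implicitly.
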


\begin{proof}
    Assume $X\indep Y|\tilde Z_\psi$.
    We will show that as $(\tilde z_m)_{\psi_m<1}$ get  large, the posterior converges to the marginal in total variation
    $$p((z_m)_m|(\tilde z_m)_m, (\psi_m)_m, (f_m)_m)\to p((z_m)_{\psi_m<1}|(z_m)_{\psi_m=1}).$$
    Then for any measurable sets $A, B$ we have
    \begin{equation*}
        \begin{aligned}
            0=&p(X\in A, Y\in B|(\tilde z_m)_m, (\psi_m)_m, (f_m)_m)-p(X\in A|(\tilde z_m)_m, (\psi_m)_m, (f_m)_m)p(X\in A|(\tilde z_m)_m, (\psi_m)_m, (f_m)_m)\\
            \to& p(X\in A, Y\in B|(z_m)_{\psi_m=1})-p(X\in A|(z_m)_{\psi_m=1})p(X\in A|(z_m)_{\psi_m=1}).
        \end{aligned}
    \end{equation*}
    This proves that $X\indep Y\ |\ \{Z_m\}_{\psi_m=1}$.

    We fix the values $(z_m)_{\psi_m=1}$ drop the dependence on these conditioned variables.
    We also drop the dependence on $\psi_m$ by redefining $f_m(z_m)=f_m(\frac{\psi_m}{1-\psi_m}z_m)$ and $a_m = \frac{1}{1-\psi_m}\tilde z_m$.
    We set all $a_m$ equal to a single large integer $a$.
    Thus we write the posterior
    $$p(z|a)\propto \prod_{m-1}^M f_m(z_m-a)dp(z).$$
    
    Now we show that this posterior converges in total variation to the marginal $p(z)$ as $a\to\infty$.
    For any $R>0$,
    \begin{equation}\label{eq: two terms}
        \begin{aligned}
            \int \left\vert\frac{\prod_{m=1}^M f_m(z_m-a)}{\int \prod_{m=1}^M f_m(z_m-a)dp(z)}-1\right\vert dp(z)=&\int_{\Vert z\Vert_\infty\leq R} \left\vert\frac{\prod_{m=1}^M f_m(z_m-a_m)}{\int_{\Vert z\Vert_\infty\leq R} \prod_{m=1}^M f_m(z_m-a)dp(z)}-1\right\vert dp(z)\\
            &+\int_{\Vert z\Vert_\infty\leq R}\bigg|\frac{\prod_{m=1}^M f_m(z_m-a)}{\int_{\Vert z\Vert_\infty\leq R} \prod_{m=1}^M f_m(z_m-a)dp(z)}\\
            &\ \ \ \ \ \ \ \ \ \ \ \ \ \ \ \ \ \ \ \ \ \  -\frac{\prod_{m=1}^M f_m(z_m-a)}{\int  \prod_{m=1}^M f_m(z_m-a)dp(z)}\bigg|dp(z)\\
            &+ \int_{\Vert z\Vert_\infty> R} \left\vert\frac{\prod_{m=1}^M f_m(z_m-a)}{\int \prod_{m=1}^M f_m(z_m-a)dp(z)}\right\vert dp(z)\\
            =&\int_{\Vert z\Vert_\infty\leq R} \left\vert\frac{\prod_{m=1}^M f_m(z_m-a)}{\int_{\Vert z\Vert_\infty\leq R}  \prod_{m=1}^M f_m(z_m-a)dp(z)}-1\right\vert dp(z)\\
            &+2\frac{\int_{\Vert z\Vert_\infty> R}  \prod_{m=1}^M f_m(z_m-a)dp(z)}{\int  \prod_{m=1}^M f_m(z_m-a)dp(z)}.\\
        \end{aligned}
    \end{equation}
    We now show both of these terms vanish as $R, a\to\infty$.
    
    For the first term, note for all $\Vert z\Vert_\infty\leq R$.
    $$\prod_{m=1}^M \frac{f_m(a+R)}{f_m(a-R)}\leq \frac{\prod_{m=1}^M f_m(z_m-a)}{\int_{\Vert z\Vert_\infty\leq R}\prod_{m=1}^M f_m(z_m-a)dp(z)}\leq \frac{1}{p(\Vert z\Vert_\infty\leq R)}\prod_{m=1}^M \frac{f_m(a-R)}{f_m(a+R)}$$
    By our assumption on the derivatives of $\log f_m$,
    \begin{equation*}
        \begin{aligned}
        \frac{f_m(a-R)}{f_m(a+R)} =& \exp\left(\log f_m(a-R) - \log f_m(a+R)\right)\\
        =&\exp\left(-\int_{a-R}^{a+R}(\log f_m)'(s)ds\right).
        \end{aligned}
    \end{equation*}
    This quantity $\to 1$ if $R\to\infty$ slowly enough and $a\to\infty$. 
    Thus, if $R\to\infty$ slowly enough and $a\to\infty$, the first term in Eqn.~\ref{eq: two terms} is approaches
    $$\int_{\Vert z\Vert_\infty\leq R} \left\vert\frac{1}{p(\Vert z\Vert_\infty\leq R)}-1\right\vert dp(z)=p(\Vert z\Vert_\infty\geq R)\to 0.$$
    
    For the second term, first note that if $L_2>L_1$ then
    \begin{equation*}
        \begin{aligned}
        \frac{f_m(L_1)}{f_m(L_2)} =& \exp\left(\log f_m(L_1) - \log f_m(L_2)\right)\\
        =&\exp\left(-\int_{L_1}^{L_2}(\log f_m)'(s)ds\right)\\
        \leq &\exp\left(-\int_{0}^{L_2-L_1}(\log f_m)'(s)ds\right)\\
        =&\frac{f_m(0)}{f_m(L_2-L_1)}.
        \end{aligned}
    \end{equation*}
    Assume we have picked $R$ large enough that $p(\Vert z\Vert_\infty\leq R)\geq 1/2$.
    Define the annulus probabilities $\tilde p(L)=p(L+1>\Vert (z_m)_{m\in S}\Vert_\infty\geq L)$
    The second term can be broken up into annuluses
    \begin{equation*}
        \begin{aligned}
            \frac{\int_{\Vert z\Vert_\infty> R}  \prod_{m=1}^M f_m(z_m-a)dp(z)}{\int  \prod_{m=1}^M f_m(z_m-a)dp(z)}\leq &\sum_{L=R}^\infty \frac{\int_{L\leq \Vert z\Vert_\infty< L+1}  \prod_{m=1}^M f_m(z_m-a)dp(z)}{\int_{\Vert z\Vert_\infty\leq R}  \prod_{m=1}^M f_m(z_m-a)dp(z)}\\
            \leq &\sum_{L=R}^{a-1} \frac{p(L\leq \Vert z\Vert_\infty< L+1) \prod_{m=1}^M f_m(a-(L+1))}{p(\Vert z\Vert_\infty\leq R)  \prod_{m=1}^M f_m(a+R)}\\
            &+\sum_{L=a}^\infty \frac{p(L\leq \Vert z\Vert_\infty< L+1) \prod_{m=1}^M f_m(L-a)}{p(\Vert z\Vert_\infty\leq R)  \prod_{m=1}^M f_m(a+R)}\\
            \leq&2p(\Vert z\Vert_\infty\leq R)^{-1}\sum_{L=R}^{a-1} \tilde p(L) \prod_{m=1}^M \frac{f_m(a - (L+1))}{f_m(a+R)}\\
            &+2p(\Vert z\Vert_\infty\leq R)^{-1}\sum_{L=a}^{2a-R-1} \tilde p(L) \prod_{m=1}^M\frac{ f_m(L-a)}{f_m(a+R)}\\
            &+\frac{p(\Vert z\Vert_\infty\geq 2a+R)}{p(\Vert z\Vert_\infty\leq R)}\prod_{m=1}^M\frac{ f_m(a-R)}{f_m(a+R)}\\
            \leq&2p(\Vert z\Vert_\infty\leq R)^{-1}\sum_{L=R}^{a-1} \tilde p(L) \prod_{m=1}^M \frac{f_m(0)}{f_m(L+R+1)}\\
            &+2p(\Vert z\Vert_\infty\leq R)^{-1}\sum_{L=a}^{2a-R-1} \tilde p(L) \prod_{m=1}^M\frac{ f_m(0)}{f_m(2a+R-L)}\\
            &+o_{a\to\infty}(1)\\
            \leq &\frac{2}{p(\Vert z\Vert_\infty\leq R)}\left(\prod_{m=1}^Mf_m(0)\right)\sum_{L=0}^{a-R-1}\frac{\tilde p(L+R)+\tilde p(2a-R-1-L)}{\prod_{m=1}^M f_m(L+2R+1)}+o_{a\to\infty}(1)\\
            \leq &\frac{2}{p(\Vert z\Vert_\infty\leq R)}\left(\prod_{m=1}^Mf_m(0)\right)\sum_{L=0}^{\infty}\frac{p(L+R)}{\prod_{m=1}^M f_m(L+2R+1)}+o_{a\to\infty}(1).\\
        \end{aligned}
    \end{equation*}
    By our assumption, this vanishes as $a\to\infty$ and $R\to\infty$ slowly enough.
\end{proof}

\subsubsection{Assumption~\ref{ass: tail assump} in the Experiments} \label{sec: assump in exp}

In this section we verify that Assumption~\ref{ass: tail assump} is satisfied in our experiments.
The noise we choose is $f_m\sim \frac 1 2g(\mathrm{Laplace})$ where $g(z_m)=z_m$ if $z_m\leq 1$ and $g(z_m)=\mathrm{sgn}(z_m)|z_m|^{1.1}$ if $z_m\geq 1$.
This has continuous density with
$f_m(z_m)\propto (2z_m)^{-\frac{0.1}{1.1}}\exp\left(-|2z_m|^{\frac 1 {1.1}}\right)$ if $|z_m|\geq \frac 1 2$ and $f_m(z_m)\propto \exp\left(-|2z_m|\right)$ if $|z_m|\leq \frac 1 2$. 
and one can check that the derivative of the logarithm is increasing almost everywhere on $(0, \infty)$ and approaches $0$.
The derivative has a jump discontinuity at $\frac 1 2$ but one can easily check that for any $\frac 1 2 > z_m>0$,
$$(\log f)'(z_m) =-2 <  -\frac{0.1}{1.1} - \frac{2^{1/1.1}}{1.1}= (\log f)'(1) \leq (\log f)'(z_m+1).$$

In our simulation, variables $X^n$ are made up of neural networks $\tilde h_n$ applied to Gaussian variables.
Neural networks are Lipschitz, so each $X^n$ is sub-Gaussian.
This is also the case when conditioning since if $E[\exp(tX^{n2})]<\infty$ for some $t>0$ then if we pick some set $S$, $E[E[\exp(tX^{n2})|(X^m)_{m\in S}]]=E[\exp(tX^{n2})]<\infty$ so $E[\exp(tX^{n2})|(X^m)_{m\in S}]<\infty$ for almost every value of $(X^m)_{m\in S}$. 
Therefore $p(L+R)\leq \sum_{n\not\in S}p(|X^n|>L+R|(X^m)_{m\in S})$ decays with negative square exponential tails, much faster than $f_m(L+2R-1)$.
For some $C>0$, any set of positive numbers, $(h_m)_m$, and large enough $R$,
\begin{equation*}
    \sum_{L=0}^{\infty}\frac{p(L+R)}{\prod_{m=1}^M f_m(h_m(L+2R+1))}\lesssim \sum_{L=0}^{\infty}\frac{\exp(-C(L+R)^2)}{\exp(-(\sum_m h_m)(L+2R))}
    =\sum_{L=0}^{\infty}\exp\left(-C(L+R)^2+(L+2R)\sum_m h_m\right).
\end{equation*}
This sum is clearly finite and for large enough $R$, each term is decreasing in $R$.
Therefore, the sum converges to $0$ as $R\to\infty$.
Thus Assumption~\ref{ass: tail assump} is satisfied in our experiments.

\subsubsection{Discussion of Theorem~\ref{Thm: main reliability}}\label{sec: thm disscussion}
Theorem~\ref{Thm: main reliability} states that if we choose our noise densities $(f_m)_m$ to have thick tails then we can answer the separating set selection problem by answering the separating representation search problem and setting $\sepset(X, Y)=\{Z_m\}_{\psi_m=1}$.
It however does not suggest that this is the only strategy for choosing $(f_m)_m$ and it does not say anything about when $\psi_m$ can be between $(0, 1)$.

For the answers of the separating set selection problem and the separating representation problem to disagree we must have a $(\psi_m)_m$ such that for all measurable sets $A, B$ and all values $(\tilde z_m)_m$
\begin{equation}\label{eq: constraint}
        0=p(X\in A, Y\in B|(\tilde z_m)_m, (\psi_m)_m, (f_m)_m)-p(X\in A|(\tilde z_m)_m, (\psi_m)_m, (f_m)_m)p(X\in A|(\tilde z_m)_m, (\psi_m)_m, (f_m)_m).
\end{equation}
This is an infinite set of constraints that must be satisfied by a value of an $M$-dimensional parameter $(\psi_m)_m$.

In Example~\ref{ex: noise problem} we showed that if the variables are jointly Gaussian and $(f_m)_m$ are also specially chosen to be Gaussian then there can be a separating representation but no separating set.
In this case, two variables are conditionally independent if their conditional covariance, which does not depend on $(\tilde z_m)_m$, is $0$.
Thus the infinitely many constraints in Eq.~\ref{eq: constraint} collapse to one:
$$\mathrm{Cov}[XY|\tilde Z_{\psi}]=0.$$
We conjecture that for generic choices of $(f_m)_m$, the infinitely many constraints in Eq.~\ref{eq: constraint} remain distinct;
they are therefore impossible to satisfy for any value of $\psi$ that is not close to the indicator of a separating set.
In this case we conjecture $\{Z_m\}_{\psi_m^*>c}$ is a separating set for any value of $1-\epsilon > c > \epsilon$.

\end{document}